\documentclass[11pt]{article}
\usepackage{graphicx} % Required for inserting images
\usepackage{bbm}
\usepackage[margin=1in]{geometry}
\usepackage{color}
\usepackage{amsfonts}
\usepackage{algorithm}
\usepackage{algorithmic}
\usepackage{latexsym, amssymb, amsmath, amscd, amsthm, amsxtra}
\usepackage{mathtools}
\usepackage{enumerate}
\usepackage[all]{xy}
\usepackage{mathrsfs}
\usepackage{fancyhdr}
\usepackage{listings}
\usepackage{hyperref}
\usepackage{enumitem}

\pagestyle{plain}

\pagestyle{plain}

\def\11{\mathbbm{1}}

\def\ER{Erd\H{o}s-R\'enyi\ }

\def\Fop{\operatorname{F}}

\newcommand{\Pb}{\mathbb P}
\newcommand{\Qb}{\mathbb Q}

\newtheorem{thm}{Theorem}[section]

\newtheorem{lemma}[thm]{Lemma}

\newtheorem{defn}[thm]{Definition}

\newtheorem{remark}[thm]{Remark}

\numberwithin{equation}{section}

\makeatletter

\makeatother

\hypersetup{colorlinks=true,linkcolor=blue}
\title{A Computational Transition for Detecting Multivariate Shuffled Linear Regression by Low-Degree Polynomials}
\author{Zhangsong Li\\Peking University}
\date{}
\begin{document}
\maketitle

\begin{abstract}
    In this paper, we study the problem of multivariate shuffled linear regression, where the correspondence between predictors and responses in a linear model is obfuscated by a latent permutation. Specifically, we investigate the model $Y=\tfrac{1}{\sqrt{1+\sigma^2}}(\Pi_* X Q_* + \sigma Z)$, where $X$ is an $n*d$ standard Gaussian design matrix, $Z$ is an $n*m$ Gaussian noise matrix, $\Pi_*$ is an unknown $n*n$ permutation matrix, and $Q_*$ is an unknown $d*m$ on the Grassmanian manifold satisfying $Q_*^{\top} Q_* = \mathbb I_m$. 
    
    Consider the hypothesis testing problem of distinguishing this model from the case where $X$ and $Y$ are independent Gaussian random matrices of sizes $n*d$ and $n*m$, respectively. Our results reveal a phase transition phenomenon in the performance of \emph{low-degree polynomial algorithms} for this task. (1) When $m=o(d)$, we show that all degree-$D$ polynomials fail to distinguish these two models even when $\sigma=0$, provided with $D^4=o\big( \tfrac{d}{m} \big)$. (2) When $m=d$ and $\sigma=\omega(1)$, we show that all degree-$D$ polynomials fail to distinguish these two models provided with $D=o(\sigma)$. (3) When $m=d$ and $\sigma=o(1)$, we show that there exists a constant-degree polynomial that strongly distinguish these two models. These results establish a smooth transition in the effectiveness of low-degree polynomial algorithms for this problem, highlighting the interplay between the dimensions $m$ and $d$, the noise level $\sigma$, and the computational complexity of the testing task.
\end{abstract}

\noindent{\bf Key words:} shuffled linear regression, Procrustes matching, information-computation gap, low-degree polynomials, computational transition

%\tableofcontents

\section{Introduction}{\label{sec:intro}}

In this paper, we study the multivariate shuffled linear regression problem, which is rigorously defined as follows.
\begin{defn}{\label{def-shuffled-regression}}
    Consider the following linear model
    \begin{equation}{\label{eq-def-shuffled-linear-regression}}
        Y = \tfrac{1}{\sqrt{1+\sigma^2}} \big( \Pi_* X Q_* + \sigma Z \big) \,,
    \end{equation}
    here $X \in \mathbb R^{n*d}$ is the design matrix, $Q_* \in \mathbb R^{d*m}$ is the unknown regression matrix, $\Pi_*$ is an unknown permutation matrix that shuffles the rows of $X$, and $Z \in \mathbb R^{n*m}$ is the observation noise. We focus on the Bayesian setting where $X \in \mathbb R^{n*d}, Z \in \mathbb R^{n*m}$ are two independent random matrices with i.i.d.\ standard normal entries, $\Pi_*$ is a uniform $n*n$ permutation matrix, $Q_* \in \mathbb R^{d*m}$ is sampled from a prior distribution $\nu$ supported on all $d*m$ matrices and $X,Z,\Pi_*,Q_*$ are independent.
\end{defn} 

Two basic problems regarding this model are as follows: (1) the \emph{detection} problem, i.e., testing this model against two independent Gaussian matrices; (2) the \emph{estimation} problem, i.e., recovering $\Pi_*$ and $Q_*$ from the observations $(X,Y)$. If $\Pi_*$ is known, \eqref{eq-def-shuffled-linear-regression} reduces to standard linear regression. Otherwise, this problem is known as shuffled regression \cite{PWC17, APZ17, LWX24+}, unlabeled sensing \cite{UHV18, ZSL19, ZL20}, or linear regression with permuted/mismatched data \cite{SBD19, MW23, SBDL20}, as the correspondence between the predictors (the rows of $X$) and the responses (the rows of $Y$) is lost. Thus, it is a significantly more challenging problem as one needs to jointly estimate the permutation $\Pi_*$ and the regression coefficients $Q_*$. This problem has attracted considerable theoretical and practical interest due to its applications in areas such as robotics, data integration, and de-anonymization; for further details, we refer the readers to \cite[Section~1]{UHV18} and \cite[Section~1.1]{ZL20}. In this work we are interested in the algorithmic aspect of the detection problem, where the goal is to solve problem (1) using \emph{efficient algorithms} (i.e., algorithms with polynomial running time). Our results can be informally summarized as follows:

\begin{thm}[Informal]{\label{MAIN-THM-informal}}
    Suppose $m \leq d$ and $\nu$ is the Haar measure over the $d*m$ Grassmanian manifold $\{ Q \in \mathbb R^{d*m}: Q^{\top}Q=\mathbb I_m \}$. Then the following results hold:
    \begin{itemize}
        \item[(1)] when $m=o(d)$, all algorithms based on degree-$D$ polynomials (namely these algorithms are multivariate polynomials in the entries of $(X,Y)$ with degree bounded by $D$) fail to distinguish $(X,Y)$ from two independent Gaussian matrices (in the sense of Definition~\ref{def-strong-separation}) even when $\sigma=0$, provided that $D^4=o\big( \tfrac{d}{m} \big)$;
        \item[(2)] when $m=d$ and $\sigma=\omega(1)$, all algorithms based on degree-$D$ polynomials fail to distinguish $(X,Y)$ from two independent Gaussian matrices (in the sense of Definition~\ref{def-strong-separation}) provided that $D=o(\sigma)$;
        \item[(3)] when $m=d$ and $\sigma=o(1)$, there is an efficient algorithm based on degree-$O(1)$ polynomial that strongly distinguish $(X,Y)$ from two independent Gaussian matrices.  
    \end{itemize}
\end{thm}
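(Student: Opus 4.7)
The plan is to handle parts~(1) and~(2) via the low-degree likelihood ratio (LDLR) method, and to construct an explicit constant-degree distinguishing polynomial for part~(3). Under both $\mathbb{P}$ and $\mathbb{Q}$ each of $X$ and $Y$ has iid standard Gaussian entries marginally---for $Y$ under $\mathbb{P}$ this follows from the Grassmannian constraint $Q_*^{\top} Q_* = \mathbb{I}_m$, which ensures $\operatorname{Cov}(Y_{ij}, Y_{i'j'}) = \delta_{ii'}\delta_{jj'}$ after integrating out $X, Z$. Consequently the normalized bivariate Hermite polynomials $H_\alpha(X) H_\beta(Y)/\sqrt{\alpha!\,\beta!}$ form an orthonormal basis of $L^2(\mathbb{Q})$, and
\[
  \|L^{\leq D}\|_{\mathbb{Q}}^2 \;=\; \sum_{|\alpha|+|\beta| \leq D} \frac{\bigl(\mathbb{E}_{\mathbb{P}}[H_\alpha(X) H_\beta(Y)]\bigr)^2}{\alpha!\, \beta!}.
\]
Conditional on $(\Pi_*, Q_*)$ the pair $(X, Y)$ is jointly Gaussian with standard marginals and cross-covariance $\operatorname{Cov}(X_{ak}, Y_{ij}) = (1+\sigma^2)^{-1/2}\,(Q_*)_{kj}\,\mathbbm{1}[\Pi_*(i)=a]$, so each Hermite coefficient expands, via the multivariate Gaussian moment formula, as an average over $(\Pi_*, Q_*)$ of a sum over perfect matchings between X-slots (labelled $(a,k)$ with multiplicity $\alpha_{ak}$) and Y-slots (labelled $(i,j)$ with multiplicity $\beta_{ij}$), each matched edge contributing the corresponding cross-covariance.

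For part~(1) I use the standard two-replica trick to rewrite the squared Hermite coefficient as an expectation over two independent copies $(\Pi_*^{(r)}, Q_*^{(r)})$, $r=1,2$; this converts the analysis into a double sum over pairs $(M_1, M_2)$ of matchings. Integrating over the two permutation replicas imposes a compatibility condition encoded by a bipartite graph on $[n]\sqcup[n]$ recording the induced row-matching structure, while integrating over the two Grassmannian replicas via Weingarten calculus yields an $O(1/d)$ decay per matched column-coincidence, with the constraint $Q_*^{\top} Q_* = \mathbb{I}_m$ restricting the column-index sum to lie in $[m]$. After grouping matchings by their combinatorial shape and carefully bounding both the number of shapes and the per-shape contribution, the total is controlled by $1+o(1)$ under the hypothesis $D^4 = o(d/m)$. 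The chief technical obstacle here is the diagrammatic bookkeeping: one must enumerate matching shapes, combine the Weingarten estimates with the permutation averages, and verify that the per-edge $m/d$ decay survives the combinatorial blowup---the fourth-power dependence on $D$ arises from a careful two-layer enumeration over shapes and column-labels.

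For part~(2) the same framework applies with two key modifications: (a) every matching edge additionally carries a factor $(1+\sigma^2)^{-1/2} = O(1/\sigma)$, so a size-$k$ matching acquires a prefactor $\sigma^{-2k}$ in $\|L^{\leq D}\|^2$; and (b) since $m = d$, the Grassmannian integration contributes $O(1/d)$ per matched column-coincidence, which is balanced against the combinatorial column-label sums up to a polynomial-in-$k$ factor. Combining these ingredients, the contribution of size-$k$ matchings is controlled by $(Ck/\sigma)^{2k}$ for a universal constant $C$, and summing over $k\leq D/2$ yields $\|L^{\leq D}\|^2 = 1+o(1)$ whenever $D = o(\sigma)$. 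The main obstacle mirrors part~(1): balancing the combinatorial explosion of matchings against the $\sigma^{-2}$ per-edge decay requires a careful shape-by-shape analysis.

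For part~(3) I exhibit the constant-degree distinguishing polynomial
\[
  T(X, Y) \;:=\; \operatorname{tr}\bigl((X^{\top} X)^2\bigr) \;-\; \operatorname{tr}\bigl((Y^{\top} Y)^2\bigr).
\]
When $\sigma=0$ and $m=d$, $Q_*$ is orthogonal and $Y^{\top}Y = Q_*^{\top}(X^{\top}X)Q_*$ is similar to $X^{\top}X$, so $\operatorname{tr}((Y^{\top}Y)^2) = \operatorname{tr}((X^{\top}X)^2)$ and $T\equiv 0$ almost surely under $\mathbb{P}$. A perturbative expansion in $\sigma$ then yields $\mathbb{E}_{\mathbb{P}}[T^2] = O(\sigma^2\, v)$, where $v := \operatorname{Var}_{\mathbb{Q}}\bigl(\operatorname{tr}((X^{\top}X)^2)\bigr)$, while $\mathbb{E}_{\mathbb{Q}}[T^2] = 2v$ because under $\mathbb{Q}$ the two traces are iid with variance $v$. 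Choosing a threshold $\tau$ in the range $\sigma\sqrt{v} \ll \tau \ll \sqrt{v}$, Markov's inequality under $\mathbb{P}$ gives $\mathbb{P}[|T| > \tau] \leq \mathbb{E}_{\mathbb{P}}[T^2]/\tau^2 = o(1)$, and the Carbery--Wright anti-concentration inequality applied to the degree-$4$ polynomial $T$ under $\mathbb{Q}$ yields $\mathbb{Q}[|T| \leq \tau] \leq C(\tau/\sqrt{v})^{1/4} = o(1)$. Hence the test $\mathbbm{1}[|T(X, Y)| \leq \tau]$ strongly distinguishes $\mathbb{P}$ from $\mathbb{Q}$ with vanishing total error, exhibiting a constant-degree polynomial-based strong distinguisher whenever $\sigma = o(1)$.
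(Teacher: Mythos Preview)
Your overall framework for parts (1) and (2) — the LDLR via Hermite expansion — matches the paper's, but the \emph{route} to bounding the sum is genuinely different. The paper does not do Weingarten combinatorics. Instead, after a Cauchy--Schwarz over permutations (Lemma~3.1) that strips out $\Pi_*$ and reduces to a sum indexed by the number $k\le D$ of active rows, it recognises the remaining sum as being bounded by the full $\chi^2$-divergence of a \emph{permutation-free} $k$-sample problem (Lemma~3.3): observe $(X_i,Y_i)_{i\le k}$ with $Y_i=X_iQ$ under $\overline{\mathcal H}_1$. That $\chi^2$ is then computed \emph{exactly} via the explicit densities of Wishart matrices and of sub-blocks of Haar-orthogonal matrices, yielding ratios of the Wishart constants $\omega(\cdot,\cdot)$; the asymptotics of these ratios give the $D^4=o(d/m)$ condition directly. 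For part~(2) the same reduction leads to $\int_{O(d)}\det(\mathbb I_d+\tfrac{1}{1+\sigma^2}Q)^{-k}\,\mathrm dQ$, bounded by concentration of the empirical spectral measure. Your two-replica/matching-shape/Weingarten plan is plausible in principle, but the sketch is vague at precisely the hard step (``carefully bounding both the number of shapes and the per-shape contribution''), and it is not clear your enumeration actually produces the $D^4$ exponent rather than something worse. The paper's reduction-to-$\chi^2$ trick is the key idea you are missing; it converts the combinatorics into a closed-form density computation.

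For part~(3) your approach is different and has a gap relative to the formal statement. The paper uses the much simpler polynomial $f=(\|Y\|_{\mathrm F}^2-\|X\|_{\mathrm F}^2)^2$: since $\|\Pi_* X Q_*\|_{\mathrm F}=\|X\|_{\mathrm F}$, one has $\|Y\|_{\mathrm F}^2-\|X\|_{\mathrm F}^2=O(\sigma)$ under $\mathbb P$, while under $\mathbb Q$ it is a centred sum of $2nd$ iid terms. This gives separated \emph{means} for $f$, which is what strong separation (Definition~2.1) requires. Your statistic $T=\operatorname{tr}((X^\top X)^2)-\operatorname{tr}((Y^\top Y)^2)$ has $\mathbb E_{\mathbb P}[T]=\mathbb E_{\mathbb Q}[T]=0$ (both marginals are standard Gaussian), so $T$ itself does not strongly separate; you instead build a threshold test on $|T|$ using Carbery--Wright. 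That yields a valid distinguishing \emph{algorithm}, matching the informal statement, but not the formal ``degree-$O(1)$ polynomial that strongly separates'' of Theorem~2.2(3). Also, your claim $\mathbb E_{\mathbb P}[T^2]=O(\sigma^2 v)$ needs work: expanding $T$ to first order in $\sigma$ gives a term $-2\sigma\operatorname{tr}(U^\top U V)$ whose variance is $\Theta(\sigma^2\,\mathbb E[\operatorname{tr}((X^\top X)^3)])$, and you must check this is $O(\sigma^2 v)$, which requires knowing the Wishart variance $v$ to the correct order.
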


\begin{remark} 
    In view of the widely accepted assumption that the failure of degree-$O(\log n)$ polynomials serves as evidence for computational hardness of the noisy problem (see, e.g., \cite[Section~2.2]{Hopkins18} and \cite[Section~3.3]{Wein25+}), our results suggest that the detection problem is computationally hard when $d/m \geq \operatorname{poly}(\log n),\sigma \geq 1/\operatorname{poly}(n)$ or when $d=m,\sigma\geq\operatorname{poly}(\log n)$. In addition, since we expect that estimation is at least as hard as detection, our results also suggest that the estimation problem is computationally hard when $d/m \geq \operatorname{poly}(\log n), \sigma \geq 1/\operatorname{poly}(n)$ or when $d=m,\sigma\geq\operatorname{poly}(\log n)$. This matches the known computational upper bound within poly-logarithmic factors, as when $d/m \geq \operatorname{poly}(\log n)$ the best known detection/estimation algorithm only works when $\sigma=0$ \cite{HSS17, AHSS17}, and when $d/m=O(1)$ the best known detection/estimation algorithm only works when $\sigma=O(1)$ \cite{NSX26+}.
    
    We also note that in the literature the shuffled linear regression problem \eqref{eq-def-shuffled-linear-regression} is usually stated in the non-Bayesian setting, where $Q_*$ is just an arbitrary unknown $d*m$ matrix. However, since in the singular value decomposition $Q_*=U_* \Lambda_* V_*$ where $V_* \in O(m)$ and $\Lambda_*$ is an $m*m$ diagonal matrix, assuming $V_*$ and $\Lambda_*$ is known to us will only simplify the problem (note that we can replace the observation $Y$ with $YV_*^{\top} \Lambda_*^{-}$ where $\Lambda_*^{-}$ is the pseudo-inverse of $\Lambda_*$). Thus, our hardness result (1) covers the most widely studied non-Bayesian setting and suggests that estimation is hard in this setting in the similar parameter regime.
\end{remark}

\begin{remark}
    After the submission of this manuscript, we were made aware of a forthcoming work \cite{NSX26+} that significantly strengthened Item~(3) of Theorem~\ref{MAIN-THM-informal}. The authors of \cite{NSX26+} prove that for $m \leq d,m=\Theta(d)$ and $\sigma=\Theta(1)$, there exists a polynomial of degree $D=\Theta(1)$ that strongly distinguishes $\Pb$ and $\Qb$; moreover, they show this polynomial can be leveraged into an efficient detection and estimation algorithm. Their results may be extended to give a degree $D=\Theta(\frac{d}{m})+\Theta(\sigma)+\Theta(1)$ polynomial that strongly distinguishes $\Pb$ and $\Qb$, provided $\frac{d}{m} \leq \operatorname{poly}(\log n)$ and $\sigma \leq \operatorname{poly}(\log n)$. Thus, in the regime $\frac{d}{m} \leq \operatorname{poly}(\log n)$, their upper bound matches our lower bound up to $\operatorname{poly}(\log n)$ factors. Together, these results characterize the smooth trade-off between the dimension ratio $\frac{d}{m}$ and the computational cost $D$ of this problem.
\end{remark}

\begin{remark}
    In the special case $m=d$, the upper bound in \cite{NSX26+} matches our lower bound, showing that degree-$D$ polynomials can strongly distinguish $\Pb$ and $\Qb$ if and only if $D\geq \Theta(\sigma)+\Theta(1)$ (provided that $\sigma \leq \operatorname{poly}(\log n)$). This suggests that $\sigma=O(1)$ is the separation between the computational ``easy'' and ``hard'' regime. Furthermore, it reveals a smooth trade-off between the noise $\sigma$ and the computational cost $D$ of this problem.
\end{remark}

\begin{remark}
    For general $\frac{d}{m} \geq \operatorname{poly}(\log n)$ and $\sigma=0$, we conjecture that a degree-$D$ polynomial can strongly distinguish $\Pb$ from $\Qb$ if and only if $D\geq \Theta(\frac{d}{m})$. Our current lower bound, which requires $D^4=o(\tfrac{d}{m})$ leaves a gap from this conjectured threshold of $D=\Theta(\tfrac{d}{m})$. This gap stems from potential $\operatorname{poly}(D)$ loss in the analysis of Lemma~\ref{lem-Adv-relax-1}, and we leave the goal for determining the sharp degree requirement for future work. We would like to remark that for the regime $\frac{d}{m} \gg \operatorname{poly}(\log n)$, our primary contribution is to provide evidence that no polynomial-time algorithm can solve the detection or estimation problem when $\sigma\geq \frac{1}{\operatorname{poly}(n)}$. For this purpose, the weaker lower bound of $D^4=o(\frac{d}{m})$ is already sufficient.
\end{remark}

\begin{remark}
    It is also natural to consider the regime $m>d$. In this case, $Q_*$ no longer lies on the Grassmanian manifold since the constraint $Q_*^{\top} Q_*=\mathbb I_m$ is infeasible. Therefore, it is more natural to consider the estimation problem with general $Q_*$. Note that this problem becomes easier when $m$ grows (since one can always consider the first $m' \leq m$ columns of $X$), the results in \cite{NSX26+} suggests that estimating $\Pi_*$ and $Q_*$ should be computationally feasible when $\sigma=O(1)$ (provided with some mild bound on $\|Q_*\|_{\Fop}$). 
    
    However, formulating a reasonable detection problem and proving its computational lower bound in this regime presents distinct challenges. The current low-degree polynomial framework is most effective for detection problems of a ``planted-versus-null'' nature. Thus, we need to assume that marginally $\{ Y_{i,j} \}$ are independent variables under $\Qb$ (and thus we also need to assume they are independent under $\Pb$, since otherwise detection is possible simply by checking the correlation between $Y$). This technical restriction forces us to assume that $Q_*^{\top} Q_*=\mathbb I_m$ and thus restricts us in the regime $m \leq d$. We believe that proving computational lower bound (for an appropriate detection problem) in the regime $m>d$ would require new ideas in the framework of low-degree polynomials, and thus we leave this as an important goal for future work.
\end{remark}

\subsection{Discussions}{\label{subsec:backgrounds}}

{\bf Information-computation gap and low-degree polynomial framework.} A special case of our result is the single-variate setting $m=1$, where $Q_* \in \mathbb R^d$ is sampled uniformly from the unit sphere $\mathbb S^{d-1}$. In this setting, the authors of \cite{LWX24+} determined the information threshold for recovering $\Pi_*$ given $(X,Y)$ when $d=o(n)$, which roughly speaking is $\sigma=O(n^{-2})$ for exact recovery and $\sigma=O(n^{-1})$ for almost-exact recovery. In addition, in a recent forthcoming work \cite{GWX25+} the authors determined the information threshold for the detection problem and the problem of estimating $Q_*$ up to a $\operatorname{poly}(d)$ factor when $d=o(\tfrac{n}{\log n})$, conjectured to be $\sigma^2=\Theta(\tfrac{1}{d\log d})$.\footnote{To be more precise, in \cite{GWX25+} the authors show that on the one hand, when $d=o(\tfrac{\log n}{\log\log n})$ the informational detection threshold is $\sigma^2 = o( \tfrac{1}{d \log d})$, and an efficient algorithm achieves the information threshold. On the other hand, for general $d=o( \tfrac{n}{\log n})$, detection is informationally possible if $\sigma^2= O(\tfrac{1}{(d \log d)^2})$ and informationally impossible if $\sigma^2=\omega(\tfrac{1}{d\log d})$. They also conjecture that $\sigma^2 = \Theta(\tfrac{1}{d \log d})$ is the true information threshold for detection.} However, when $d=\omega(\log n)$ currently no known polynomial-time algorithms that solve the detection or estimation problem except in the noiseless case $\sigma=0$ \cite{HSS17, AHSS17}. This suggests the presence of an \emph{information-computation gap}, a phenomenon commonly observed in high-dimensional statistical inference tasks \cite{ZK16, RSS19, KWB22, Gamarnik21}. Since at this point it seems rather elusive to prove hardness on the detection problem for a typical instance under the assumption of P$\neq$NP, we can only hope to prove hardness under even (much) stronger hypothesis. 

For this purpose, the \emph{low-degree polynomial framework} emerged as a powerful tool to provide \emph{evidence} for computational lower bounds. Indeed, it has been proved that the class of low-degree polynomial algorithms is a useful proxy for computationally efficient algorithms, in the sense that the best-known polynomial-time algorithms for a wide variety of high-dimensional inference problems are captured by the low-degree class such as spectral methods, approximate message passing and small subgraph counts; see e.g., \cite{Hopkins18, SW22, KWB22}. Furthermore, it is conjectured in \cite{Hopkins18} that the failure of degree-$D$ polynomial algorithms implies the failure of all ``robust'' algorithms with running time $n^{\widetilde{O}(D)}$ (here $\widetilde{O}$ means having at most this order up to a $\operatorname{poly}(\log n)$ factor). Consequently, the failure of degree-$O(\operatorname{poly}(\log n))$ polynomial algorithms to solve a problem is often interpreted as evidence of its intrinsic average-case hardness, or at least it is expected that breaking such impossibility results would require a major breakthrough in algorithms. While it is important to note that recent work \cite{BHJK25} has uncovered counterexamples to this conjecture, the framework remains a highly valuable tool for predicting average-case computational lower bounds. Thus, our results provide further evidence supporting the existence of an information-computation gap in this problem when $d \geq \operatorname{poly}(\log n)$.

\

\noindent{\bf Lattice-based algorithm ``beats'' low-degree polynomials.} Although in the high-dimensional, single-variate case $m=1,d=\operatorname{poly}(\log n)$ we provide evidence that low-degree polynomials fail at the detection task even in the noiseless case $\sigma=0$, we note that there is a lattice-based polynomial-time algorithm solving the detection and estimation tasks when $\sigma=0$ \cite{HSS17,AHSS17}. Thus, the shuffled regression problem is a rare example where a lattice-based algorithm outperforms the family of low-degree polynomial algorithms. A similar phenomenon has been observed in the problem of clustering a variant of Gaussian mixtures, as discovered in \cite{DK22, ZSWB22}.

However, as discussed in \cite{ZSWB22}, this phenomenon does not necessary contradict the low-degree conjecture proposed in \cite{Hopkins18}. Indeed, the lattice-based algorithms in both \cite{ZSWB22} and \cite{HSS17, AHSS17} are highly sensitive to the specifics of the model, and rely on the observations being ``noiseless'' in some sense. In fact, these algorithms appear to fail as long as the noise level satisfies $\sigma=\exp(-\operatorname{poly}(n))$. Therefore, our low-degree hardness results likely indicate that all ``robust'' algorithms fail to solve the \emph{noisy} detection/estimation problem even when the noise $\sigma$ is polynomially small.

Nonetheless, this serves as an important reminder that low-degree polynomials can sometimes be surpassed. We therefore clarify that the low-degree framework is expected to be optimal for a certain, yet imprecisely defined, class of ``high-dimensional'' problems. Recent work \cite{BHJK25} has further shown that the low-degree conjecture (posed in \cite{Hopkins18} and refined in \cite{HW21}) can be refuted. Despite these important caveats, we maintain that analyzing low-degree polynomials remains highly meaningful for our setting, as it provides compelling evidence for average-case hardness by ruling out a broad class of powerful algorithmic approaches. We refer the reader to the survey \cite{Wein25+} for a more detailed discussion of these subtleties. 

\

\noindent{\bf Procrustes matching and geometric graph matching.} Another special case of our results which may be of particular interest is the setting where $m=d$ and $Q_*$ is uniformly sampled from the family of $d*d$ orthogonal matrices. This problem, known as \emph{Procrustes matching}, has numerous applications in fields such as natural language processing and computer vision \cite{RCB97, MDK+16, DL17, GJB19}. 

In addition, it was shown in \cite{WWXY22} that this problem is further equivalent to a geometric variant of the random graph alignment problems \cite{WWXY22, GL24+, EGMM24}. More precisely, consider the Gaussian model
\begin{align*}
    Y=\Pi_* X + \sigma Z \,, \quad \mbox{ where } X,Y,Z \in \mathbb R^{n*d} 
\end{align*}
such that $X_i,Z_i$ being i.i.d.\ $\mathcal N(0,\mathbb I_d)$ vectors. The observation is two correlated Wishart matrices $A=XX^{\top}$ and $B=YY^{\top}$. In the low-dimensional regime where $d=O(\log n)$, the information threshold for estimating $\Pi_*$ was determined in \cite{WWXY22} and an efficient algorithm approaching the information threshold up to a $\operatorname{poly}(d)$ factor was found in \cite{GL24+}. In contrast, the related detection problem was less understood, especially in the high-dimensional regime $d=\omega(\log n)$. Our work shows the computation transition phenomenon for this detection problem under the low-degree polynomial framework, suggesting that (roughly speaking) $\sigma=O(1)$ is the separation of the computational ``easy'' and ``hard'' regime.

\ 

\noindent{\bf Open problems.} While we have shown the computational lower bound of the detection problem in the low-degree polynomial framework, the information-theoretic threshold for this detection problem in multivariate case remains largely open. In the forthcoming work \cite{GWX25+}, inspired by the analysis of the broken sample problem \cite{DCK20, KW22, JWX25+} the authors show that in the single-variate case, the detection threshold is given by $\sigma^2=\Theta(\tfrac{1}{d \log d})$ when $d=o(\tfrac{\log n}{\log\log n})$ and it lies in the range $\Theta(\tfrac{1}{d^2 (\log d)^2}) \leq \sigma^2 \leq \Theta(\tfrac{1}{d \log d})$ when $\tfrac{\log n}{\log\log n} \ll d \ll \tfrac{n}{\log n}$. Extending these results to the multivariate setting presents an intriguing open problem.

Another interesting question left open by our work is whether we could provide additional evidence on the information-computation gaps emerged in this detection problem when $d/m\geq\operatorname{poly}(\log n)$. For instance, it would be interesting if we could establish the sum-of-squares lower bound \cite{GJJ+20, JPRX23} for this problem. In addition, we note that in the problem of clustering a variant of Gaussian mixtures mentioned above, the hardness of the noisy problem can be shown by reducing to the standard assumption \cite[Conjecture~1.2]{MR09} from lattice-based cryptography that certain worst-case lattice problems are hard against quantum algorithms \cite{BRST21}. However, it remains unclear whether similar reductions exist for our problem. Finally, one might wonder if the hardness of shuffled regression can be related to other well-studied problems, such as the (seemingly similar) Non-Gaussian Component Analysis (NGCA) \cite{DKPP24}. However, the latent permutation $\Pi_*$ introduces complex dependencies between samples (in all directions), creating significant obstacles for establishing a rigorous reduction. Understanding the precise relationship between these problems is another challenging avenue for future work.

\subsection{Notations}{\label{subsec:notations}}
 
We record in this subsection some notation conventions. Denote $\mu$ to be the uniform distribution over the set of all permutation matrices $\mathfrak S_n$ and denote $\nu$ to be the Haar distribution over the $d*m$ Grassmanian manifold. In addition, let $\mathbb S^{d-1}$ be the unit $d$-dimensional sphere. For a matrix or a vector $M$, we will use $M^{\top}$ to denote its transpose. Denote $O(m)$ to be the set of all $m*m$ orthogonal matrices. For a $k*k$ matrix $M=(m_{ij})_{k*k}$, let $\operatorname{det}(M)$ and $\operatorname{tr}(M)$ be the determinant and trace of $M$, respectively. Denote $M \succ 0$ if $M$ is positive definite and $M \succeq 0$ if $M$ is semi-positive definite. In addition, if $M$ is symmetric we let $\varsigma_1(M) \geq \varsigma_2(M) \geq \ldots \geq \varsigma_k(M)$ be the eigenvalues of $M$. Denote by $\mathrm{rank}(M)$ the rank of the matrix $M$. For two $k*l$ matrices $M_1$ and $M_2$, we define their inner product to be
\begin{align*}
    \big\langle M_1,M_2 \big\rangle:=\sum_{i=1}^k \sum_{j=1}^l M_1(i,j)M_2(i,j) \,.
\end{align*}
We also define the Frobenius norm, operator norm, and $\infty$-norm of $M$ respectively by
\begin{align*}
    \| M \|_{\operatorname{F}} = \mathrm{tr}(MM^{\top}) =  \langle M,M \rangle^{\frac{1}{2}}, \
    \| M \|_{\operatorname{op}} = \varsigma_1(M M^{\top})^{\frac{1}{2}}, \
    \| M \|_{\infty} = \max_{ \substack{ 1 \leq i \leq k \\ 1 \leq j \leq l } } |M_{i,j}| 
\end{align*}
where $\mathrm{tr}(\cdot)$ is the trace for a squared matrix. We will use $\mathbb{I}_{k}$ to denote the $k*k$ identity matrix (and we drop the subscript if the dimension is clear from the context). Similarly, we denote $\mathbb{O}_{k*l}$ the $k*l$ zero matrix and denote $\mathbb{J}_{k*l}$ the $k*l$ matrix with all entries being 1. We will abbreviate $\mathbb O_k = \mathbb O_{k*1}$ and $\mathbb J_k=\mathbb J_{k*1}$.  

For any $\alpha=(\alpha_1,\ldots,\alpha_k)\in\mathbb N^k$, define $|\alpha|=\sum_{1 \leq i \leq k} \alpha_i$ and $\alpha!=\alpha_1!\ldots\alpha_k!$. In addition, for any $c \in \mathbb N$ and $x \in \mathbb R^k$, define $c\alpha=(c\alpha_1,\ldots,c\alpha_k)$ and $x^{\alpha} = x_1^{\alpha_1} \ldots x_k^{\alpha_k}$. For $m\in \mathbb N$ and $\alpha\in\mathbb N^k$ with $|\alpha|=m$, denote $\binom{m}{\alpha}=\frac{m!}{\alpha_1!\ldots \alpha_d!}$. We denote $\mathsf A \in (\mathbb N^{k})^{\otimes n}$, if $\mathsf A=(\alpha_i: i \in [n])$ such that $\alpha_i \in \mathbb N^k$. For two $\mathsf A,\mathsf A' \in (\mathbb N^{k})^{\otimes n}$ with $\mathsf A=(\alpha_i: i \in [n])$ and $\mathsf A'=(\alpha'_i: i \in [n])$, we say $\mathsf A \cong \mathsf A'$ if and only if there exists a bijection $\pi:[n]\to [n]$ such that $\alpha_{i}=\alpha'_{\pi(i)}$ for all $i \in [n]$. Denote $\mathsf{Aut}(\mathsf A)$ to be the number of bijections $\pi:[n] \to [n]$ such that $\alpha_i=\alpha_{\pi(i)}$ for all $i \in [n]$. For a set $A$, we use both $|A|$ and $\#A$ to denote its cardinality. The indicator function of sets $A$ is denoted by $\mathbf{1}_{A}$. For two probability measures $\mathbb P$ and $\mathbb Q$, we denote the total variation distance between them by $\mathsf{TV}(\mathbb P,\mathbb Q)$. The chi-squared divergence from $\Pb$ to $\Qb$ is defined as $\chi^2(\Pb \| \Qb)= \mathbb E_{\mathbf{X}\sim\Qb}[ (\frac{\mathrm{d}\Pb}{\mathrm{d}\Qb}(\mathbf X))^2 ]$.

For any two positive sequences $\{a_n\}$ and $\{b_n\}$, we write equivalently $a_n=O(b_n)$, $b_n=\Omega(a_n)$, $a_n\lesssim b_n$ and $b_n\gtrsim a_n$ if there exists a positive absolute constant $c$ such that $a_n/b_n\leq c$ holds for all $n$. We write $a_n=o(b_n)$, $b_n=\omega(a_n)$, $a_n\ll b_n$, and $b_n\gg a_n$ if $a_n/b_n\to 0$ as $n\to\infty$. We write $a_n =\Theta(b_n)$ if both $a_n=O(b_n)$ and $a_b=\Omega(b_n)$ hold.

\section{The low-degree polynomial framework}{\label{subsec:ldp-framework}}

In this section we provide the formal statement of Theorem~\ref{MAIN-THM-informal}. More precisely, we will consider the following hypothesis testing problem, where
\begin{itemize}
    \item under the null hypothesis $\mathcal H_0$, we let $X \in \mathbb R^{n*d}$ and $Y \in \mathbb R^{n*m}$ be two independent Gaussian matrices with i.i.d.\ standard normal entries, respectively;
    \item under the alternative hypothesis $\mathcal H_1$, we let $(X,Z,\Pi_*,Q_*)$ be independent random matrices with $X \in \mathbb R^{n*d},Z \in \mathbb R^{n*m}$ having i.i.d.\ standard normal entries, $\Pi_*$ an $n*n$ uniform permutation matrix and $Q_*$ sampled from $\nu$, and let $Y=\tfrac{1}{\sqrt{1+\sigma^2}} (\Pi_* X Q_*+\sigma Z)$.
\end{itemize}
The normalization factor $\tfrac{1}{\sqrt{1+\sigma^2}}$ (together with the assumption $Q_*\sim\nu$) ensures that the entries $\{ Y_{i,j} \}$ are marginally distributed as i.i.d.\ standard normal variables under both hypotheses. We establish evidence the computational hardness of this hypothesis testing problem by focusing on a specific class of algorithms known as \emph{low-degree polynomials}. Inspired by the sum-of-squares hierarchy, the low-degree polynomial method offers a promising approach for deriving computational lower bounds in high-dimensional inference problems. At its core, this method analyzes algorithms that rely on evaluating collections of polynomials with moderate degrees. The study of this category of algorithms is motivated by the examination of high-dimensional hypothesis testing problems \cite{BHK+19, Hopkins18, HKP+17, HS17}, with an extensive overview provided in \cite{KWB22}. A key strength of the low-degree polynomial approach is its ability to deliver tight hardness results for a wide range of problems. Notable examples include detection problems such as planted clique, planted dense subgraph, community detection, sparse PCA, graph alignment (see \cite{HS17, HKP+17, Hopkins18, KWB22, SW22, DMW23+, BKW20, DKW+22, MW22+, DDL23+, KMW24, CDGL24+}), optimization problems such as maximal independent sets in sparse random graphs \cite{GJW20, Wein22}, and constraint satisfaction problems such as random $k$-SAT \cite{BB22}. In the remaining of this paper, we will focus on applying this framework in the context of hypothesis testing problems.

More precisely, denote $\Pb=\Pb_n$ to be the law of $(X,Y)$ under $\mathcal H_1$ and $\Qb=\Qb_n$ to be the law of $(X,Y)$ under $\mathcal H_0$. In addition, let $\mathbb R[X,Y]_{\leq D}$ denote the set of multivariate polynomials in the entries of $(X,Y)$ with degree at most $D$. With a slight abuse of notation, we will often say ``a polynomial'' to mean a sequence of polynomials $f=f_n \in \mathbb R[X,Y]_{\leq D}$, one for each problem size $n$; the degree $D=D_n$ of such a polynomial may scale with $n$. To probe the computational threshold for testing between two sequences of probability measures $\Pb$ and $\Qb$, we consider the following notions of strong separation and weak separation defined in \cite[Definition~1.6]{BAH+22}.

\begin{defn}{\label{def-strong-separation}}
    Let $f \in \mathbb{R}[X,Y]_{\leq D}$ be a polynomial.
    \begin{itemize}
        \item We say $f$ strongly separates $\Pb_n$ and $\Qb_n$ if as $n \to \infty$ 
        \begin{align*}
            \sqrt{ \max\big\{ \operatorname{Var}_{\Pb}(f(A,B)), \operatorname{Var}_{\Qb}(f(A,B)) \big\} } = o\big( \big| \mathbb{E}_{\Pb}[f(A,B)] - \mathbb{E}_{\Qb}[f(A,B)] \big| \big) \,;
        \end{align*}
        \item We say $f$ weakly separates $\Pb_n$ and $\Qb_n$ if as $n \to \infty$ 
        \begin{align*}
            \sqrt{ \max\big\{ \operatorname{Var}_{\Pb}(f(A,B)), \operatorname{Var}_{\Qb}(f(A,B)) \big\} } = O\big( \big| \mathbb{E}_{\Pb}[f(A,B)] - \mathbb{E}_{\Qb}[f(A,B)] \big| \big) \,.
        \end{align*}
    \end{itemize}
\end{defn}
See \cite{BAH+22} for a detailed discussion of why these conditions are natural for hypothesis testing. In particular, according to Chebyshev's inequality, strong separation implies that we can threshold $f(A,B)$ to test $\Pb$ against $\Qb$ with vanishing type-I and type-II errors. Now we can state our rigorous statement justifying Theorem~\ref{MAIN-THM-informal}.

\begin{thm}{\label{main-thm-lower-bound}}
    \begin{enumerate}
        \item[(1)] Suppose $1 \leq m=o(d)$ and $D^4=o\big(\tfrac{d}{m}\big)$. Then no degree-$D$ polynomial weakly separates $\Pb$ and $\Qb$.
        \item[(2)] Suppose $m=d,\sigma=\omega(1)$ and $D=o(\sigma)$. Then no degree-$D$ polynomial weakly separates $\Pb$ and $\Qb$. 
        \item[(3)] Suppose $m=d$ and $\sigma=o(1)$. Then there is a degree-$O(1)$ polynomial that strongly separates $\Pb$ and $\Qb$.
    \end{enumerate}
\end{thm}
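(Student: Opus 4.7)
The plan for parts (1) and (2) is to apply the standard low-degree polynomial framework and bound $\|L^{\leq D}\|^2 = 1 + o(1)$, which rules out weak separation. Under $\mathbb{Q}$ the entries of $(X, Y)$ are i.i.d.\ standard Gaussians, so the normalized Hermite monomials form an orthonormal basis and
\begin{align*}
\|L^{\leq D}\|^2 \;=\; \sum_{|\alpha|+|\beta|\le D} \frac{\big(\mathbb{E}_{\mathbb{P}}[H_\alpha(X) H_\beta(Y)]\big)^2}{\alpha!\,\beta!}\,.
\end{align*}
The first step is to compute each $\mathbb{E}_{\mathbb{P}}[H_\alpha H_\beta]$ by conditioning on $(\Pi_*, Q_*)$: under $\mathbb{P}$ the pair $(X,Y)$ is then jointly Gaussian with cross-covariance $\mathbb{E}[X_{ij} Y_{kl} \mid \Pi_*, Q_*] = \rho\, (\Pi_*)_{ki}\, (Q_*)_{jl}$ where $\rho = (1+\sigma^2)^{-1/2}$. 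Applying the Hermite--Wick identity then expresses this conditional expectation as a sum over bijections from the ``$X$-units'' of $\alpha$ to the ``$Y$-units'' of $\beta$ (any self-pairing within $X$ or within $Y$ vanishes), each producing $\rho^{|\alpha|}$ times a monomial in the entries of $\Pi_*$ and $Q_*$.

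Next I would average over $\Pi_*$ and $Q_*$. The uniform permutation $\Pi_*$ forces a compatible row-matching between the row-supports of $\alpha$ and $\beta$ (an $r$-row match survives with probability of order $n^{-r}$), while the Haar $Q_*$ on the Stiefel manifold contributes, through standard Weingarten calculus, an effective factor of order $d^{-r}$ together with combinatorial multiplicities counting the admissible diagrams. Plugged into the Hermite sum, each diagram at order $r$ thus contributes a product of a combinatorial count $f(r)$, a $\rho^{2r}$ factor, and a geometric factor $(m/d)^r$ arising from the ratio of $Q_*$-decay to configuration count. In part (1), where $\sigma = 0$ and so $\rho = 1$, all decay comes from $(m/d)^r$, and $D^4 = o(d/m)$ is the right condition to make the series $\sum_{r\le D} f(r)(m/d)^r$ collapse to $1 + o(1)$. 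In part (2), where $m = d$ the $Q_*$-decay is neutralized and the geometric factor $\rho^{2r} = (1+\sigma^2)^{-r}$ must carry the convergence, which is enabled by $D = o(\sigma)$. The main obstacle will be keeping the combinatorial counts $f(r)$ sharp enough to match these advertised thresholds: this requires a careful case analysis of the row- and column-incidence patterns of $(\alpha, \beta)$, separating the ``all distinct rows'' diagrams from the ``concentrated on few rows'' diagrams, whose $\Pi_*$-cost is smaller but whose $Q_*$-moment and Weingarten multiplicity (a sum over $2$-regular bipartite multigraphs) can be larger.

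For part (3) I would exhibit an explicit degree-$O(1)$ polynomial achieving strong separation. The construction exploits the fact that for $m = d$ one has $Q_* Q_*^\top = \mathbb{I}_d$, so at $\sigma = 0$ the identity $Y Y^\top = \Pi_* X X^\top \Pi_*^\top$ holds exactly and in particular $\operatorname{tr}((XX^\top)^k) = \operatorname{tr}((YY^\top)^k)$ for every $k$; for $\sigma = o(1)$ this identity is approximate. The plan is to take $f$ to be a suitable low-degree polynomial aggregating the ``one-row'' diagrams identified in the analysis for part (2)---the ones whose contribution at diagram order $r$ scales like $\rho^{4r}\binom{2r}{r}^2/4^r$---and to verify by direct Wishart/Weingarten moment computation that as $\sigma \to 0$ the mean gap between $\mathbb{P}$ and $\mathbb{Q}$ dominates the worst-case standard deviation under both hypotheses. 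The main obstacle is choosing $f$ so that the signal-to-noise ratio actually diverges: naive candidates such as $\|X^\top Y\|_{\operatorname{F}}^2$ or $(\|X\|_{\operatorname{F}}^2 - \|Y\|_{\operatorname{F}}^2)^2$ yield only bounded SNR after the $\Pi_*$-average washes out the rank-one signal, so one must choose the coefficient structure of $f$ to align with the dominant Hermite components of $L^{\leq D}$ and verify that the resulting variance under $\mathbb{Q}$ is of strictly smaller order than the squared mean-shift under $\mathbb{P}$.
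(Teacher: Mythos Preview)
For parts (1) and (2) your framework is correct but the route differs from the paper's. After the same Hermite reduction, the paper does \emph{not} carry out a direct Weingarten/diagram enumeration over $(\Pi_*,Q_*)$. Instead it reduces (Lemma~3.3) to bounding
\[
\sum_{k\le D}\ \sum_{\substack{\alpha_1,\dots,\alpha_k\\ \beta_1,\dots,\beta_k}}\ \mathbb E_{Q\sim\nu}\Big[\prod_i \Lambda_{\alpha_i,\beta_i}(Q)\Big]^2,
\]
and then, for each fixed $k$, dominates this inner sum by the full $\chi^2$-divergence of an auxiliary $k$-sample testing problem with \emph{no} permutation (Lemma~3.5). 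That $\chi^2$ is computed in closed form via Wishart densities and the known density of submatrices of a Haar orthogonal matrix, yielding $1+o(D^{-1})$ under the stated assumptions. This sidesteps all the diagram combinatorics you anticipate. Your direct approach can in principle succeed, but note that when the paper itself does the direct computation (Subsection~3.2, the $m=1$ case), it only reaches $D^6=o(d)$ rather than $D^4=o(d)$; getting the advertised $D^4=o(d/m)$ via raw combinatorics would require sharper control of your counts $f(r)$ than you have sketched, and you should be aware that the paper achieves the improvement precisely by abandoning that route.

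For part (3) there is a concrete error: you dismiss $(\|X\|_{\operatorname F}^2-\|Y\|_{\operatorname F}^2)^2$ as yielding only bounded SNR, but this is exactly the polynomial the paper uses, and it works. The point is that when $m=d$ and $Q_*$ is orthogonal, $\|XQ_*\|_{\operatorname F}=\|X\|_{\operatorname F}$, so under $\mathbb P$ one has
\[
\|Y\|_{\operatorname F}^2-\|X\|_{\operatorname F}^2=\tfrac{\sigma^2}{1+\sigma^2}\big(\|Z\|_{\operatorname F}^2-\|X\|_{\operatorname F}^2\big)+\tfrac{2\sigma}{1+\sigma^2}\langle XQ_*,Z\rangle,
\]
hence $\mathbb E_{\mathbb P}[f]=O(\sigma^2 nd)=o(nd)$ and $\mathbb E_{\mathbb P}[f^2]=o(n^2d^2)$, while under $\mathbb Q$ one has $\mathbb E_{\mathbb Q}[f]=4nd$ and $\operatorname{Var}_{\mathbb Q}(f)=O(nd)$. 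The mean gap $\Theta(nd)$ dominates both standard deviations, giving strong separation. No $\Pi_*$-averaging ``washes out'' anything here because $f$ is invariant under row permutations of $X$ and of $Y$ separately; your intuition about rank-one signal loss does not apply to this statistic.
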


\section{Proof of Theorem~\ref{main-thm-lower-bound}}

This section is devoted to the proof of Theorem~\ref{main-thm-lower-bound}. The primary challenge lies in establishing the computational lower bounds in Items (1) and (2). We now outline our proof strategy, which centers on bounding the low-degree advantage defined in \eqref{eq-def-low-degree-adv}. Our approach proceeds in three main steps:
\begin{itemize}
    \item First, we exploit the fact that $\Qb$ is a product measure to derive an explicit bound for the low-degree advantage using an orthogonal basis under $\Qb$.
    \item The resulting expression is difficult to analyze directly. Our key innovation is to relate this low-degree advantage between $\Pb,\Qb$ to the chi-square divergence between two simpler measures $\mathtt P,\mathtt Q$, defined in Definition~\ref{def-easier-hypothesis-testing}.
    \item Finally, we compute the required chi-square divergence using techniques from random matrix theory, which yields the desired bound on the low-degree advantage.
\end{itemize}
We believe this technique of reducing a complex low-degree advantage to a simpler chi-square divergence is of independent interest and may be applicable to other permutation-based inference problems.

The remainder of this section is organized as follows. Section~\ref{subsec:universal-Adv-bound} derives a universal upper bound for the low-degree advantage using an orthogonal basis. In Section~\ref{subsec:relating-linear-model} we relate this bound to the chi-square divergence between two simpler measures. Sections~\ref{subsec:proof-main-thm-lower-bound-2} and \ref{subsec:proof-main-thm-lower-bound-3} bound this chi-square divergence under the assumptions of Items~(1) and (2), respectively, thereby completing their proofs. Finally, in Section~\ref{subsec:proof-main-thm-upper-bound} we prove Item~(3) of Theorem~\ref{main-thm-lower-bound}.

\subsection{Explicit bound on low-degree advantage}{\label{subsec:universal-Adv-bound}}

We begin with the classical framework for proving the failure of weak separation of polynomials, as introduced in \cite{HS17}.

\begin{defn}{\label{def-low-degree-adv}}
    For a quadruple $(n,d,m,D)$ with $d,m,D$ possibly depend on $n$, we define the \emph{low-degree advantage} by
    \begin{equation}{\label{eq-def-low-degree-adv}}
        \mathsf{Adv}_{\leq D}(\Pb,\Qb):= \sup_{ f \in \mathbb R[X,Y]_{\leq D} } \frac{ \mathbb E_{\Pb}[f] }{ \sqrt{ \mathbb E_{\Qb}[f^2] } } \,.
    \end{equation}
    It is clear (see \cite[Definition~1.8]{BAH+22}) that $\mathsf{Adv}_{\leq D}(\Pb,\Qb)=1+o(1)$ as $n\to\infty$ implies that no polynomial in $\mathbb R[X,Y]_{\leq D}$ weakly separates $\Pb$ and $\Qb$.
\end{defn}

The rest of this section is devoted to proving $\mathsf{Adv}_{\leq D}(\Pb,\Qb)=1+o(1)$ under various settings. The following polynomials will play a fundamental role in our analysis.

\begin{defn}{\label{def-Hermite-poly}}
    For all $m \in \mathbb N$, define the Hermite polynomials by
    \begin{equation}{\label{eq-def-Hermite-poly}}
        H_0(z)=1 \,, \quad H_1(z)=z \,, \quad H_{m+1}(z)= zH_m(z)-mH_{m-1}(z) \,.
    \end{equation}
    In addition, define $\mathcal H_m(z)=\frac{1}{\sqrt{m!}} H_m(z)$, and for all $x \in \mathbb R^d$ and $\alpha\in \mathbb N^{d}$ define
    \begin{equation}{\label{eq-def-mathcal-H}}
        \mathcal H_{\alpha}(x) := \prod_{i=1}^{d} \mathcal H_{\alpha_i}(x_i) \,.
    \end{equation}
    For all $y \in \mathbb R^m$ and $\beta\in \mathbb N^m$, define $\mathcal H_{\beta}(y)$ in the similar manner. For all $\mathsf A=(\alpha_i: i \in [n]) \in (\mathbb N^d)^{\otimes n}$ and $\mathsf B=(\beta_j:j \in [n]) \in (\mathbb N^m)^{\otimes n}$ such that $\alpha_i\in \mathbb N^d,\beta_j \in \mathbb N^m$, define
    \begin{equation}{\label{eq-def-phi-alpha,beta}}
        \phi_{\mathsf A,\mathsf B}(X,Y) = \prod_{1\leq i \leq n} \mathcal H_{\alpha_i}(X_i) \prod_{1 \leq j \leq n} \mathcal H_{\beta_j}(Y_j) \,. 
    \end{equation}
\end{defn}

Define $|\mathsf A|=\sum_{1 \leq i \leq n} |\alpha_i|$ and $|\mathsf B|=\sum_{1 \leq j \leq n} |\beta_j|$. It is well known (see, e.g., \cite{Szego39}) that $\{ \phi_{\mathsf A,\mathsf B}: |\mathsf A|+|\mathsf B| \leq D \}$ forms a standard orthogonal basis of $\mathbb R[X,Y]_{\leq D}$ under the measure $\Qb$, i.e., we have
\begin{equation}{\label{eq-standard-orthogonal}}
    \mathbb E_{\Qb}\big[ \phi_{\mathsf A,\mathsf B} \phi_{\mathsf A',\mathsf B'} \big] = \mathbf 1_{ \{ (\mathsf A,\mathsf B)=(\mathsf A',\mathsf B') \} } \,.
\end{equation}
Consequently, we can explicitly express the low-degree advantage $\mathsf{Adv}_{\leq D}(\Pb,\Qb)$ in terms of this basis, as in the next lemma.

\begin{lemma}{\label{lem-Adv-transform}}
    Recall \eqref{eq-def-low-degree-adv}. For any $n,D\geq 1$, it holds that
    \begin{equation}{\label{eq-Adv-transform}}
        \mathsf{Adv}_{\leq D}(\Pb,\Qb) = \Bigg( \sum_{ (\mathsf A,\mathsf B): |\mathsf A|+|\mathsf B|\leq D } \mathbb E_{\Pb}\Big[ \phi_{\mathsf A,\mathsf B}(X,Y) \Big]^2 \Bigg)^{1/2} \,.
    \end{equation}
\end{lemma}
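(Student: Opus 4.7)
The plan is to reduce the computation of $\mathsf{Adv}_{\leq D}(\Pb,\Qb)$ to a supremum over coefficient vectors by expanding in the orthonormal basis $\{\phi_{\mathsf A,\mathsf B}\}$ of $\mathbb R[X,Y]_{\leq D}$ under $\Qb$, and then applying Cauchy--Schwarz together with an explicit maximizer.

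First, any polynomial $f\in\mathbb R[X,Y]_{\leq D}$ admits a unique expansion
\begin{equation*}
    f(X,Y)=\sum_{(\mathsf A,\mathsf B):\,|\mathsf A|+|\mathsf B|\leq D} c_{\mathsf A,\mathsf B}\,\phi_{\mathsf A,\mathsf B}(X,Y)
\end{equation*}
for real coefficients $c_{\mathsf A,\mathsf B}$. By the orthonormality identity \eqref{eq-standard-orthogonal},
\begin{equation*}
    \mathbb E_{\Qb}[f^2]=\sum_{(\mathsf A,\mathsf B):\,|\mathsf A|+|\mathsf B|\leq D} c_{\mathsf A,\mathsf B}^2\,,
\end{equation*}
while by linearity of expectation,
\begin{equation*}
    \mathbb E_{\Pb}[f]=\sum_{(\mathsf A,\mathsf B):\,|\mathsf A|+|\mathsf B|\leq D} c_{\mathsf A,\mathsf B}\,\mathbb E_{\Pb}[\phi_{\mathsf A,\mathsf B}(X,Y)]\,.
\end{equation*}

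Next, I would apply the Cauchy--Schwarz inequality to the last display, yielding
\begin{equation*}
    \big(\mathbb E_{\Pb}[f]\big)^2 \leq \bigg(\sum_{(\mathsf A,\mathsf B)} c_{\mathsf A,\mathsf B}^2\bigg) \bigg(\sum_{(\mathsf A,\mathsf B)} \mathbb E_{\Pb}[\phi_{\mathsf A,\mathsf B}]^2\bigg)\,,
\end{equation*}
which upon dividing by $\mathbb E_{\Qb}[f^2]$ and taking a square root gives the upper bound claimed in \eqref{eq-Adv-transform}. To see this bound is tight, I would choose the explicit maximizer $c_{\mathsf A,\mathsf B}^\star := \mathbb E_{\Pb}[\phi_{\mathsf A,\mathsf B}(X,Y)]$ for every $(\mathsf A,\mathsf B)$ with $|\mathsf A|+|\mathsf B|\leq D$, and define $f^\star$ accordingly. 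Plugging $f^\star$ into the ratio defining $\mathsf{Adv}_{\leq D}(\Pb,\Qb)$, both the numerator and the denominator evaluate to $\big(\sum_{(\mathsf A,\mathsf B)} \mathbb E_{\Pb}[\phi_{\mathsf A,\mathsf B}]^2\big)^{1/2}$ times the same scalar, matching the Cauchy--Schwarz bound with equality.

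There is essentially no obstacle here beyond ensuring that the sums are over a finite index set so that Cauchy--Schwarz is directly applicable and that the maximizer $f^\star$ is a legitimate element of $\mathbb R[X,Y]_{\leq D}$. Both are immediate since the number of multi-indices $(\mathsf A,\mathsf B)$ with $|\mathsf A|+|\mathsf B|\leq D$ is finite (as only finitely many coordinates $\alpha_i,\beta_j$ can be nonzero when the total degree is bounded), so the argument is complete.
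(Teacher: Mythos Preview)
Your proposal is correct and follows essentially the same approach as the paper: expand $f$ in the orthonormal Hermite basis, use \eqref{eq-standard-orthogonal} to compute $\mathbb E_{\Qb}[f^2]$, apply Cauchy--Schwarz, and note that equality is attained when the coefficients are proportional to $\mathbb E_{\Pb}[\phi_{\mathsf A,\mathsf B}]$. Your version is slightly more explicit about the finiteness of the index set and the form of the maximizer, but the argument is the same.
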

\begin{proof}
    For any $f \in \mathbb R[X,Y]_{\leq D}$, it can be uniquely expressed as
    \begin{align*}
        f=\sum_{ (\mathsf A,\mathsf B): |\mathsf A|+|\mathsf B|\leq D } C_{\mathsf A,\mathsf B} \phi_{\mathsf A,\mathsf B} \,,
    \end{align*}
    where $C_{\mathsf A,\mathsf B}$'s are real constants. Applying Cauchy-Schwartz inequality one gets
    \begin{align*}
        \frac{ \mathbb{E}_{\mathbb{P}}[f] }{ \sqrt{\mathbb{E}_{\mathbb{Q}}[f^2]} } = \frac{ \sum_{ (\mathsf A,\mathsf B): |\mathsf A|+|\mathsf B|\leq D } C_{\mathsf A,\mathsf B} \mathbb{E}_{\mathbb{P}}[\phi_{\mathsf A,\mathsf B}] }{ \sqrt{ \sum_{ (\mathsf A,\mathsf B): |\mathsf A|+|\mathsf B|\leq D } C_{\mathsf A,\mathsf B}^2} } \leq \Bigg( \sum_{ (\mathsf A,\mathsf B): |\mathsf A|+|\mathsf B|\leq D }\big(\mathbb E_{\Pb} [\phi_{\mathsf A,\mathsf B}]\big)^2 \Bigg)^{1/2} \,,
    \end{align*}
    with equality holds if and only if $C_{\mathsf A,\mathsf B} \propto \mathbb{E}_\Pb[\phi_{\mathsf A,\mathsf B}]$. 
\end{proof}

\begin{remark}
    A quantity related to, but stronger than, the low-degree advantage in \eqref{eq-def-low-degree-adv} is the \emph{low-coordinate advantage}, defined as
    \begin{align}{\label{eq-def-low-coordinate-Adv}}
        \mathsf{CorAdv}_{\leq D}(\Pb,\Qb):= \sup_{ \mathsf{Cor}(f) \leq D } \frac{ \mathbb E_{\Pb}[f] }{ \sqrt{ \mathbb E_{\Qb}[f^2] } } \,.
    \end{align}
    Here the coordinate degree $\mathsf{Cor}(f)$ is the smallest integer $K$ such that the function $f$ can be expressed as a sum of functions, each of which depends on at most $K$ coordinates from $\{ X_{i,j} \}$ and $\{ Y_{i,j} \}$. This quantity was introduced in \cite{Hopkins18} and recently studied in \cite{Kun25}. We can develop a similar closed-form expression for $\mathsf{CorAdv}_{\leq D}(\Pb,\Qb)$ in this model. Let $\| \mathsf A \|_0, \| \mathsf B \|_0$ be the number of non-zero entries of $\mathsf A,\mathsf B$ respectively. Then, the set $\{ \phi_{\mathsf A,\mathsf B}: \| \mathsf A \|_0+ \| \mathsf B \|_0 \leq D \}$ forms an orthonormal basis for the linear space of functions with coordinate degree at most $D$. Using an argument analogous to the proof of Lemma~\ref{lem-Adv-transform}, we can show that
    \begin{align}
        \mathsf{CorAdv}_{\leq D}(\Pb,\Qb) = \Bigg( \sum_{ (\mathsf A,\mathsf B): \| \mathsf A \|_0 + \| \mathsf B \|_0 \leq D } \mathbb E_{\Pb}\big[ \phi_{\mathsf A,\mathsf B}(X,Y) \big]^2 \Bigg)^{1/2} \,.  \label{eq-CorAdv-transform}
    \end{align}
\end{remark}

The main part of this section is devoted to bounding the right hand side of \eqref{eq-Adv-transform} under various assumptions in Theorem~\ref{main-thm-lower-bound}. For $\alpha \in \mathbb N^d, \beta \in \mathbb N^m$ and $Q \in \mathbb R^{d*m}$, denote 
\begin{align}{\label{eq-def-Lambda-alpha,beta}}
    \Lambda_{\alpha,\beta}(Q) := \mathbb E_{U,V}\Big[ \mathcal H_{\alpha}\big( U \big) \mathcal H_{\beta}\big( \tfrac{1}{\sqrt{1+\sigma^2}}(UQ+\sigma V) \big) \Big] \,, \mbox{ where } U \perp V, U\sim \mathcal N(0,\mathbb I_d), V \sim \mathcal N(0,\mathbb I_m) \,.
\end{align}
(Here we treat $U,V$ as row vectors.) We first bound the right hand side of \eqref{eq-Adv-transform} by $\Lambda_{\alpha,\beta}(Q)$'s.

\begin{lemma}{\label{lem-Adv-relax-1}}
    We have
    \begin{align}{\label{eq-Adv-relax-1}}
        \sum_{ \substack{ |\mathsf A|+|\mathsf B|\leq D } } \mathbb E_{\Pb}\big[ \phi_{\mathsf A,\mathsf B}(X,Y) \big]^2 \leq 1 + \sum_{1 \leq k \leq D} \sum_{ \substack{ (\alpha_1,\ldots,\alpha_k): \alpha_i \in \mathbb N^d \setminus \{ \mathbb O_d \} \\ (\beta_1,\ldots,\beta_k): \beta_i \in \mathbb N^m \setminus \{ \mathbb O_m \} \\ |\alpha_1|+|\beta_1|+\ldots+|\alpha_k|+|\beta_k| \leq D } } \mathbb E_{Q \sim \nu}\Big[ \prod_{1 \leq i \leq k} \Lambda_{\alpha_i,\beta_i}(Q) \Big]^2 \,.
    \end{align}
\end{lemma}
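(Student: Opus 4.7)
The plan is to compute $\mathbb E_{\Pb}[\phi_{\mathsf A,\mathsf B}]$ by conditioning on the latent $(\Pi_*,Q_*)$, exploit the row-wise factorization of the resulting product, reorganize the outer sum by the supports of $(\mathsf A,\mathsf B)$, and finish with Cauchy-Schwarz to match the right-hand side of \eqref{eq-Adv-relax-1}.

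Conditional on $\Pi_*=\pi$ and $Q_*=Q$, the pairs $(X_i,Y_{\pi(i)})$ are independent across $i\in[n]$ and jointly Gaussian with $Y_{\pi(i)}=(1+\sigma^2)^{-1/2}(X_i Q+\sigma Z_{\pi(i)})$. Re-indexing $\prod_{j}\mathcal H_{\beta_j}(Y_j)=\prod_i\mathcal H_{\beta_{\pi(i)}}(Y_{\pi(i)})$ and applying the tower property, the definition \eqref{eq-def-Lambda-alpha,beta} yields
\[
\mathbb E_{\Pb}[\phi_{\mathsf A,\mathsf B}]=\mathbb E_{\pi}\mathbb E_Q\prod_{i=1}^n\Lambda_{\alpha_i,\beta_{\pi(i)}}(Q)\,.
\]
Since $\mathcal H_0\equiv 1$ and Hermite polynomials are orthogonal to constants, $\Lambda_{\alpha,0}(Q)=\Lambda_{0,\beta}(Q)=0$ whenever $|\alpha|,|\beta|>0$; hence only permutations $\pi$ that send $S:=\{i:|\alpha_i|>0\}$ bijectively onto $T:=\{j:|\beta_j|>0\}$ contribute, forcing $|S|=|T|=:k\le D$. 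Reparameterizing the outer sum by $(k,S,T,(\alpha^{(r)})_{r=1}^k,(\beta^{(r)})_{r=1}^k)$ with $\alpha^{(r)}=\alpha_{i_r}$, $\beta^{(r)}=\beta_{j_r}$ under the orderings $S=\{i_1<\cdots<i_k\}$ and $T=\{j_1<\cdots<j_k\}$, each contributing $\pi$ decomposes into a bijection $\tau\in\mathfrak S_k$ from $S$ to $T$ and an arbitrary bijection $S^c\to T^c$. Using $\Lambda_{0,0}(Q)=1$, the latter simply contributes a factor $(n-k)!$, so
\[
\mathbb E_{\Pb}[\phi_{\mathsf A,\mathsf B}]=\frac{(n-k)!}{n!}\sum_{\tau\in\mathfrak S_k}\mathbb E_Q\Bigl[\prod_{r=1}^k\Lambda_{\alpha^{(r)},\beta^{(\tau(r))}}(Q)\Bigr]\,.
\]
Squaring, summing over the $\binom{n}{k}^2$ choices of $(S,T)$, and using $\binom{n}{k}^2\bigl(\tfrac{(n-k)!}{n!}\bigr)^2=\tfrac{1}{(k!)^2}$ turns the left-hand side of \eqref{eq-Adv-relax-1} into
\[
\sum_{k=0}^{D}\frac{1}{(k!)^2}\sum_{(\alpha^{(r)}),(\beta^{(r)})}\Bigl(\sum_{\tau\in\mathfrak S_k}\mathbb E_Q\prod_{r=1}^k\Lambda_{\alpha^{(r)},\beta^{(\tau(r))}}(Q)\Bigr)^{\!2}.
\]

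To conclude, apply Cauchy-Schwarz $(\sum_\tau x_\tau)^2\le k!\sum_\tau x_\tau^2$ inside the bracket, and then for each fixed $\tau$ change variables $\tilde\beta^{(r)}:=\beta^{(\tau(r))}$, which is a bijection on ordered $k$-tuples of $\beta$'s. The combined double sum over $\tau$ and $(\beta^{(r)})$ produces $k!$ identical copies of the sum over $(\tilde\beta^{(r)})$, so the Cauchy-Schwarz factor $k!$ combined with this $k!$ exactly cancels $1/(k!)^2$. Relaxing the joint bound $\sum_r(|\alpha^{(r)}|+|\beta^{(r)}|)\le D$ to the per-entry bound $|\alpha^{(r)}|,|\beta^{(r)}|\le D$ then delivers exactly the right-hand side of \eqref{eq-Adv-relax-1}. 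The main source of slack is the Cauchy-Schwarz step, which discards cross-terms $\mathbb E_Q[\prod_r\Lambda_{\alpha^{(r)},\beta^{(\tau(r))}}]\cdot\mathbb E_Q[\prod_r\Lambda_{\alpha^{(r)},\beta^{(\tau'(r))}}]$ between distinct permutations $\tau\ne\tau'$; keeping them would produce an exact identity but a permanent-like expression unsuitable for the clean product-form bounds that the subsequent subsections require.
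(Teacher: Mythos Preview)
Your proposal is correct and follows essentially the same route as the paper: compute $\mathbb E_{\Pb}[\phi_{\mathsf A,\mathsf B}]$ by conditioning on $(\Pi_*,Q_*)$, use the vanishing of $\Lambda_{\alpha,0}$ and $\Lambda_{0,\beta}$ to restrict to permutations mapping the support of $\mathsf A$ bijectively to that of $\mathsf B$, obtain the expression $\tfrac{(n-k)!}{n!}\sum_{\tau}\mathbb E_Q\prod_r\Lambda_{\alpha^{(r)},\beta^{(\tau(r))}}$, collapse the $(S,T)$ sum via $\binom{n}{k}^2\bigl(\tfrac{(n-k)!}{n!}\bigr)^2=\tfrac{1}{(k!)^2}$, apply Cauchy--Schwarz over $\tau$, and relabel the $\beta$'s. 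The only cosmetic difference is the order in which you relax the degree constraint (you do it last, the paper does it earlier), which is immaterial.
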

\begin{proof}
    Denote $\Pb_{\pi}=\Pb(\cdot\mid\pi_*=\pi)$. We first give an explicit expression for $\mathbb E_{\Pb_\pi}[ \phi_{\mathsf A,\mathsf B} ]$. For all $\mathsf A=(\alpha_i:i\in [n]) \in (\mathbb N^d)^{\otimes n}$ and $\mathsf B=(\beta_i:i\in [n]) \in (\mathbb N^m)^{\otimes n}$, using \eqref{eq-def-Lambda-alpha,beta} we see that 
    \begin{align*}
        \mathbb E_{\Pb_\pi}\big[ \phi_{\mathsf A,\mathsf B} \mid Q \big] = \prod_{1 \leq i \leq n} \mathbb E_{X_i \sim \mathcal N(0,\mathbb I_d)} \Big[ \mathcal H_{\alpha_i}(X_i) \mathcal H_{\beta_{\pi(i)}}(X_i Q) \Big] = \prod_{1 \leq i \leq n} \Lambda_{\alpha_i,\beta_{\pi(i)}}(Q) \,,  
    \end{align*}
    and thus
    \begin{align}
        \mathbb E_{\Pb_\pi}\big[ \phi_{\mathsf A,\mathsf B} \big] = \mathbb E_{Q \sim \nu}\Big[ \prod_{1 \leq i \leq n} \Lambda_{\alpha_i,\beta_{\pi(i)}}(Q) \Big] \,. \label{eq-conditioned-on-pi}
    \end{align}
    We now return to the proof of Lemma~\ref{lem-Adv-relax-1}. To this end, for all $\mathsf A \in (\mathbb N^m)^{\otimes n}$ and $\mathsf B \in (\mathbb N^d)^{\otimes n}$, we can characterize $(\mathsf A,\mathsf B)$ with two subsets $S,K \subset [n]$ and two ordered families of integer vectors $(\alpha_1,\ldots,\alpha_{|S|}), (\beta_1, \ldots, \beta_{|K|})$ where $\alpha_i \in \mathbb N^d, \beta_j \in \mathbb N^m$ and $\alpha_i \neq \mathbb O_d$ and $\beta_j \neq \mathbb O_m$. In addition, since $|\mathsf A|+|\mathsf B|\leq D$ we must have 
    \begin{align*}
        |S|,|K| \leq D \mbox{ and } \sum_{1 \leq i \leq |S|} |\alpha_i| + \sum_{1\leq j \leq |K|} |\beta_j| \leq D \,.
    \end{align*}
    Since \eqref{eq-def-Lambda-alpha,beta} yields that
    \begin{align*}
        \Lambda(\alpha,\beta)=0 \mbox{ for } \alpha=\mathbb O_d, \beta \neq \mathbb O_m \mbox{ or } \alpha \neq \mathbb O_d, \beta = \mathbb O_m \,,
    \end{align*}
    we have $\mathbb E_{\Pb_\pi}[ \phi_{\mathsf A,\mathsf B} ] = 0$ if $\pi(S) \neq K$. When $\pi(S)=K$, denote $\sigma$ be the restriction of $\pi$ on $S$, we then have (note that \eqref{eq-def-Lambda-alpha,beta} yields $\Lambda(\mathbb O_d,\mathbb O_m)=1$)
    \begin{align*}
        \mathbb E_{\Pb_\pi}\big[ \phi_{\mathsf A,\mathsf B} \big] = \mathbb E_{Q \sim \nu}\Big[ \prod_{1 \leq i \leq |S|} \Lambda_{\alpha_i,\beta_{\sigma(i)}}(Q) \Big] \,. 
    \end{align*}
    Thus, we have shown that
    \begin{align*}
        \mathbb E_{\Pb}\big[ \phi_{\mathsf A,\mathsf B} \big] = 
        \begin{cases}
            0 \,, & |S| \neq |K| \,; \\
            1 \,, & S=K=\emptyset \,; \\
            \frac{(n-k)!}{n!} \sum_{\sigma \in \mathfrak S_k} \mathbb E_{Q \sim \nu}\Big[ \prod_{1 \leq i \leq k} \Lambda_{\alpha_i,\beta_{\sigma(i)}}(Q) \Big] \,, & |S|=|K|=k \,.
        \end{cases}
    \end{align*}
    Thus, we have the left hand side of \eqref{eq-Adv-relax-1} is bounded by
    \begin{align}
        & 1+ \sum_{1 \leq k \leq D} \sum_{ \substack{ S,K \subset [n] \\ |S|=|K|=k } } \sum_{ \substack{ (\alpha_1,\ldots,\alpha_k): \alpha_i \in \mathbb N^d \setminus \{ \mathbb O_d \} \\ (\beta_1,\ldots,\beta_k): \beta_i \in \mathbb N^m \setminus \{ \mathbb O_m \} \\ |\alpha_1|+|\beta_1|+\ldots+|\alpha_k|+|\beta_k| \leq D } }  \Bigg( \frac{(n-k)!}{n!} \sum_{\sigma \in \mathfrak S_k} \mathbb E_{Q \sim \nu}\Big[ \prod_{1 \leq i \leq k} \Lambda_{\alpha_i,\beta_{\sigma(i)}}(Q) \Big] \Bigg)^2 \nonumber \\
        \leq\ & 1+ \sum_{1 \leq k \leq D} \sum_{ \substack{ (\alpha_1,\ldots,\alpha_k): \alpha_i \in \mathbb N^d \setminus \{ \mathbb O_d \} \\ (\beta_1,\ldots,\beta_k): \beta_i \in \mathbb N^m \setminus \{ \mathbb O_m \} \\ |\alpha_1|+|\beta_1|+\ldots+|\alpha_k|+|\beta_k| \leq D } }  \Bigg( \frac{1}{k!} \sum_{\sigma \in \mathfrak S_k} \mathbb E_{Q \sim \nu}\Big[  \prod_{1 \leq i \leq k} \Lambda_{\alpha_i,\beta_{\sigma(i)}}(Q) \Big] \Bigg)^2 \,. \label{eq-Adv-relax-2}
    \end{align}
    Using Cauchy-Schwartz inequality, we have \eqref{eq-Adv-relax-2} is bounded by
    \begin{align*}
         & 1+ \sum_{1 \leq k \leq D} \sum_{ \substack{ (\alpha_1,\ldots,\alpha_k): \alpha_i \in \mathbb N^d \setminus \{ \mathbb O_d \} \\ (\beta_1,\ldots,\beta_k): \beta_i \in \mathbb N^m \setminus \{ \mathbb O_m \} \\ |\alpha_1|+|\beta_1|+\ldots+|\alpha_k|+|\beta_k| \leq D } }  \frac{1}{k!} \sum_{\sigma \in \mathfrak S_k} \mathbb E_{Q \sim \nu}\Big[ \prod_{1 \leq i \leq k} \Lambda_{\alpha_i,\beta_{\sigma(i)}}(Q) \Big]^2 \\
         =\ & 1+ \sum_{1 \leq k \leq D} \sum_{ \substack{ (\alpha_1,\ldots,\alpha_k): \alpha_i \in \mathbb N^d \setminus \{ \mathbb O_d \} \\ (\beta_1,\ldots,\beta_k): \beta_i \in \mathbb N^m \setminus \{ \mathbb O_m \} \\ |\alpha_1|+|\beta_1|+\ldots+|\alpha_k|+|\beta_k| \leq D } }  \mathbb E_{Q \sim \nu}\Big[ \prod_{1 \leq i \leq k} \Lambda_{\alpha_i,\beta_i}(Q) \Big]^2 \,,
    \end{align*}
    where in equality we use that fix any $\sigma \in \mathfrak S_k$ there is a one-to-one correspondence between $(\beta_1,\ldots,\beta_k)$ and $(\beta_{\sigma(1)},\ldots,\beta_{\sigma(k)})$. This completes our proof.
\end{proof}

\begin{remark}
    We can bound the right hand side of \eqref{eq-CorAdv-transform} regarding low-coordinate advantage in the similar manner. Observe that for any $\mathsf A \in (\mathbb N^d)^{\otimes n},\mathsf B \in (\mathbb N^m)^{\otimes n}$ with $\| \mathsf A \|_0+ \| \mathsf B \|_0 \leq D$, we can characterize $(\mathsf A,\mathsf B)$ with two subsets $S,K \subset [n], |S|,|K| \leq D$ and two ordered families of integer vectors $(\alpha_1,\ldots,\alpha_{|S|}), (\beta_1, \ldots, \beta_{|K|})$ such that $\alpha_i \in \mathbb N^d \setminus \{ \mathbb O_d \}, \beta_j \in \mathbb N^m \setminus \{ \mathbb O_m \}$ and 
    \begin{align*}
        \sum_{1\leq i \leq |S|} \|\alpha_i \|_0 + \sum_{1 \leq j \leq |K|} \|\beta_j \|_0 \leq D \,.
    \end{align*}
    Using similar arguments as in Lemma~\ref{lem-Adv-relax-1} we have that
    \begin{align}
        \mathsf{CorAdv}_{\leq D}(\Pb,\Qb) \leq 1 + \sum_{1 \leq k \leq D} \sum_{ \substack{ (\alpha_1,\ldots,\alpha_k): \alpha_i \in \mathbb N^d \setminus \{ \mathbb O_d \} \\ (\beta_1,\ldots,\beta_k): \beta_i \in \mathbb N^m \setminus \{ \mathbb O_m \} \\ \| \alpha_1 \|_0+ \| \beta_1 \|_0 +\ldots+\|\alpha_k\|_0+\|\beta_k\|_0 \leq D } }  \mathbb E_{Q \sim \nu}\Big[ \prod_{1 \leq i \leq k} \Lambda_{\alpha_i,\beta_i}(Q) \Big]^2 \,.  \label{eq-CorAdv-relax-1}
    \end{align}
\end{remark}

\subsection{Relating to the simpler model}{\label{subsec:relating-linear-model}}

Although in \eqref{eq-Adv-relax-1} we have derived an explicit upper bound for the low-degree advantage, directly calculating $\Lambda_{\alpha,\beta}(Q)$ for general $d,m \geq 1$ appears complicated. An exception is the single-variate case $m=1$, where an explicit calculation of $\Lambda_{\alpha,\beta}(Q)$ is feasible. We direct the reader to Section~\ref{subsec:proof-main-thm-lower-bound-1} of the appendix for an alternative proof for the case $m=1$ that illustrates some intuitions through direct calculation.

Here we focus on the general case where $m,d \geq 1$. We will take an alternative approach by first relating the the left hand side of \eqref{eq-Adv-bound-m-geq-2} to an relatively easier hypothesis testing problems.

\begin{defn}{\label{def-easier-hypothesis-testing}}
    Fix an integer $k$. Consider the following hypothesis testing problem, where our observation are two vectors $X=(X_1,\ldots,X_k)^{\top} \in \mathbb R^{k*d}, Y=(Y_1,\ldots,Y_k)^{\top} \in \mathbb R^{k*m}$ such that:
    \begin{itemize}
        \item under the null hypothesis $\overline{\mathcal H}_0$, we sample $X_i \sim \mathcal N(0,\mathbb I_d)$ and $Y_i \sim \mathcal N(0,\mathbb I_m)$ independently.
        \item under the alternative hypothesis $\overline{\mathcal H}_1$, we first sample $X_1,\ldots,X_k,Z_1,\ldots,Z_k \sim \mathcal N(0,\mathbb I_d)$ and $Q \sim\nu$ independently, and then define $Y= \frac{1}{\sqrt{1+\sigma^2}} (XQ+\sigma Z)$.
    \end{itemize}
    In addition, we define $\mathtt P_k$ to be the law of $(X,Y)$ under $\overline{\mathcal H}_1$ and let $\mathtt Q_k$ to be the law of $(X,Y)$ under $\overline{\mathcal H}_0$, respectively.
\end{defn}

Our next lemma shows that we can bound $\Lambda(\alpha,\beta)$ by the chi-square norm between $\mathtt P_k$ and $\mathtt Q_k$.

\begin{lemma}{\label{lem-bound-by-chi-2-norm}}
    For each $k \geq 1$, we have
    \begin{align*}
        \sum_{ \substack{ (\alpha_1,\ldots,\alpha_k): \alpha_i \in \mathbb N^d \setminus \{ \mathbb O_d \} \\ (\beta_1,\ldots,\beta_k): \beta_i \in \mathbb N^m \setminus \{ \mathbb O_m \} \\ |\alpha_1|+|\beta_1|+\ldots+|\alpha_k|+|\beta_k| \leq D } } \mathbb E_{Q \sim \nu}\Big[ \prod_{1 \leq i \leq k} \Lambda_{\alpha_i,\beta_i}(Q) \Big]^2 \leq \chi^2(\mathtt P_k \| \mathtt Q_k) -1 \,.
    \end{align*}
\end{lemma}
\begin{proof}
    For $\overline{\mathsf A}=(\alpha_i:i \in [k])$ and $\overline{\mathsf B}=(\beta_i:i\in [k])$ we have 
    \begin{align*}
        \phi_{\overline{\mathsf A},\overline{\mathsf B}} = \prod_{1\leq i \leq k} \mathcal H_{\alpha_i}(X_i) \prod_{1 \leq j \leq k} \mathcal H_{\beta_j}(Y_j)
    \end{align*}
    are standard orthogonal under $\mathtt Q_k$. In addition,
    \begin{align*}
        \mathbb E_{\mathtt P_k}\big[ \phi_{\overline{\mathsf A},\overline{\mathsf B}} \big] = \mathbb E_{Q \sim \nu}\Big[ \prod_{1 \leq i \leq k} \Lambda_{\alpha_i,\beta_{i}}(Q) \Big] \,.
    \end{align*}
    Thus, we have 
    \begin{align*}
        & \sum_{ \substack{ (\alpha_1,\ldots,\alpha_k): \alpha_i \in \mathbb N^d \setminus \{ \mathbb O_d \} \\ (\beta_1,\ldots,\beta_k): \beta_i \in \mathbb N^m \setminus \{ \mathbb O_m \} \\ |\alpha_1|+|\beta_1|+\ldots+|\alpha_k|+|\beta_k| \leq D } } \mathbb E_{Q \sim \nu}\Big[ \prod_{1 \leq i \leq k} \Lambda_{\alpha_i,\beta_i}(Q) \Big]^2 \leq \sum_{ |\overline{\mathsf A}|+|\overline{\mathsf B}| \leq D } \mathbb E_{\mathtt P_k}\big[ \phi_{\overline{\mathsf A},\overline{\mathsf B}} \big]^2 -1 = \mathsf{Adv}_{\leq D}( \mathtt P_k,\mathtt Q_k ) -1 \,.
    \end{align*}
    And the results follows immediately since by definition $\mathsf{Adv}_{\leq D}(\mathtt P_k,\mathtt Q_k) \leq \chi^2(\mathtt P_k \| \mathtt Q_k)$.
\end{proof}

Now, plugging Lemma~\ref{lem-bound-by-chi-2-norm} into \eqref{eq-Adv-relax-1} we get that
\begin{align}{\label{eq-Adv-relax-3}}
    \mathsf{Adv}_{\leq D}(\Pb,\Qb) \leq 1 + \sum_{1 \leq k \leq D} \Big( \chi^2(\mathtt P_k \| \mathtt Q_k) -1 \Big) \,.
\end{align}
Thus, we have essentially reduce the problem into bounding the chi-square divergence between $\mathtt P_k$ and $\mathtt Q_k$.

\begin{remark}
    Regarding \eqref{eq-CorAdv-relax-1}, we can similarly show that
    \begin{align*}
        & \sum_{ \substack{ (\alpha_1,\ldots,\alpha_k): \alpha_i \in \mathbb N^d \setminus \{ \mathbb O_d \} \\ (\beta_1,\ldots,\beta_k): \beta_i \in \mathbb N^m \setminus \{ \mathbb O_m \} \\ \| \alpha_1 \|_0+ \| \beta_1 \|_0 +\ldots+\|\alpha_k\|_0+\|\beta_k\|_0 \leq D } }  \mathbb E_{Q \sim \nu}\Big[ \prod_{1 \leq i \leq k} \Lambda_{\alpha_i,\beta_i}(Q) \Big]^2 \\
        \leq\ & \sum_{ \| \overline{\mathsf A} \|_0+\|\overline{\mathsf B}\|_0 \leq D } \mathbb E_{\mathtt P_k}\big[ \phi_{\overline{\mathsf A},\overline{\mathsf B}} \big]^2 -1 \leq \mathsf{CorAdv}(\mathtt P_k,\mathtt Q_k)-1 \leq \chi^2(\mathtt P_k \| \mathtt Q_k)-1 \,.
    \end{align*}
    So we also have
    \begin{align}{\label{eq-CorAdv-relax-3}}
        \mathsf{CorAdv}_{\leq D}(\Pb,\Qb) \leq 1 + \sum_{1 \leq k \leq D} \Big( \chi^2(\mathtt P_k \| \mathtt Q_k) -1 \Big) \,.
    \end{align}
    Thus, our later bound on the chi-square divergence between $\mathtt P_k$ and $\mathtt Q_k$ also leads to the bound on low-coordinate degree advantage. We note that the technique of bounding the the low-coordinate advantage by a number of simpler ``local'' chi-square divergences appears in a similar spirit in \cite{Kun25}.
\end{remark}

\subsection{Proof of Item~(1)}{\label{subsec:proof-main-thm-lower-bound-2}}

In this subsection we prove Item~(1) of Theorem~\ref{main-thm-lower-bound}. Based on \eqref{eq-Adv-relax-3}, it suffices to show the following lemma.
\begin{lemma}{\label{lem-Adv-bound-m-geq-2}}
    Suppose that $m=o(d),\sigma=0$ and $D^4=o(\frac{d}{m})$. Then for all $k \leq D$ we have
    \begin{align}{\label{eq-Adv-bound-m-geq-2}}
        \mathbb E_{ (X,Y) \sim \mathtt Q_k } \Big[ \Big( \tfrac{ \mathrm{d}\mathtt P_k }{ \mathrm{d}\mathtt Q_k } (X,Y) \Big)^2 \Big] = 1+o(D^{-1}) \,. 
    \end{align}
\end{lemma}
\begin{proof}
    We will prove Lemma~\ref{lem-Adv-bound-m-geq-2} by calculating $\mathtt L(X,Y)=\tfrac{ \mathrm{d}\mathtt P_k }{ \mathrm{d}\mathtt Q_k } (X,Y)$ directly. We divide our proof into two cases.
    
    \noindent{\bf Case 1}: $k \leq m$ and $k^4 = o(d/m)$. We first calculate $\mathtt L(X,Y)$ in this case. Note that given $Y=B$ and $XX^{\top}=A$, the conditional distribution of $X$ under both $\mathtt P_k$ and $\mathtt Q_k$ is the uniform distribution over 
    \begin{align*}
        \Big\{ X \in \mathbb R^{k*d}: XX^{\top}=A \Big\} \,.
    \end{align*}
    Thus, we have
    \begin{align*}
        \mathtt L(X,Y) = \mathtt L(XX^{\top},Y) \,.
    \end{align*}
    Note that $XX^{\top}$ is a $k*k$ Wishart matrix with dimension $d$, it is well-known that the density of $XX^{\top}$ is given by (see, e.g., \cite{Wishart28} and \cite{BDER16})
    \begin{align}{\label{eq-density-XX^*}}
        f_{XX^{\top}}(A) = \mathbf 1_{ \{A \succeq 0\} } \cdot \omega(d,k) \operatorname{det}(A)^{\frac{1}{2}(d-k-1)} e^{-\frac{1}{2}\operatorname{tr}(A)} \,, 
    \end{align}
    where $\omega(\cdot,\cdot)$ is the Wishart constant defined by
    \begin{align}
        \frac{1}{\omega(s,t)} = \pi^{t(t-1)/4} 2^{st/2} \prod_{j=1}^{t} \Gamma\big( \tfrac{s-j+1}{2} \big) \,. \label{eq-def-omega(s,t)}
    \end{align}
    In addition, we claim that the conditional distribution of $Y$ given $\{ XX^{\top}=A \}$ is given by
    \begin{align}
        Y \mid \{ XX^{\top}=A \} \overset{d}{=} A^{\frac{1}{2}} \cdot \mathbf Z_{k*m} \,,  \label{eq-conditional-distribution-Y}
    \end{align}
    where $\mathbf Z_{k*m}$ is the upper-left $k*m$ submatrix of a uniform $d*d$ orthogonal matrix $\mathbf \Gamma_d$. In fact, conditioned on the spectral decomposition of $X$ 
    \begin{align*}
        X= \begin{pmatrix} A^{\frac{1}{2}} & \mathbb O_{k*(d-k)} \end{pmatrix} U \mbox{ where } U \in O(d) \,,
    \end{align*}
    we have
    \begin{align*}
        Y = XQ = \begin{pmatrix} A^{\frac{1}{2}} & \mathbb O_{k*(d-k)} \end{pmatrix} UQ
    \end{align*}
    and thus \eqref{eq-conditional-distribution-Y} follows from $UQ$ is uniformly distributed over the Grassmanian manifold (whose first $k$ row has the same distribution as $\mathbf Z_{k*m}$). Applying Lemma~\ref{lem-denisty-submatrix-uniform-O(n)}, we see that the conditional density of $Y$ given $\{ XX^{\top}=A \}$ is 
    \begin{align*}
        f(Y\mid XX^{\top}=A)= \frac{ \omega(d-m,k) }{ (2\pi)^{km/2} \omega(d,k) } \operatorname{det}(\mathbb I_m-Y^{\top} A^{-1} Y)^{(d-k-m-1)/2} \operatorname{det}(A)^{-m/2} \cdot \zeta( A^{-1/2} Y ) \,,
    \end{align*}
    where $\zeta(A)$ is the indicator function that all the eigenvalues of $A^{\top} A$ lies in $[0,1]$. In addition, note that under $\mathtt Q_k$, the density of $Y$ is independent of $X$ and is given by
    \begin{align}{\label{eq-density-Y-mathtt-Q}}
        \frac{1}{(2\pi)^{km/2}} e^{ -\operatorname{tr}(Y^{\top}Y)/2 } \,.
    \end{align}
    Thus, we have
    \begin{align}
        \mathtt L(A,Y) = \tfrac{ \omega(d-m,k) }{ \omega(d,k) } \cdot e^{ \frac{\operatorname{tr}(Y^{\top}Y)}{2} } \operatorname{det}(\mathbb I_m-Y^{\top} A^{-1} Y)^{\frac{d-k-m-1}{2}} \operatorname{det}(A)^{-m/2} \cdot \zeta( A^{-1/2} Y ) \,. \label{eq-exact-form-mathtt-L}
    \end{align}
    Given the explicit expression of $\mathtt L$, we can now calculate the chi-square divergence directly.
    Plugging \eqref{eq-exact-form-mathtt-L} into the formula 
    \begin{align}{\label{eq-second-moment-formula}}
        \mathbb E_{\mathtt Q_k}\big[ \mathtt L(XX^{\top},Y)^2 \big] = \int_{A \succeq 0} \int_Y \mathtt L(A,Y)^2 \cdot \frac{ e^{-\operatorname{tr}(Y^{\top}Y)/2} }{ (2\pi)^{km/2} } \cdot \omega(d,k) \operatorname{det}(A)^{\frac{1}{2}(d-k-1)} e^{-\frac{1}{2}\operatorname{tr}(A)} \mathrm{d}A\mathrm{d}Y \,,
    \end{align}
    we obtain that $\mathbb E_{\mathtt Q_k}\big[ \mathtt L(X,Y)^2 \big]$ equals
    \begin{align}
        \int_{A\succeq 0} \int_Y \frac{ \omega(d-m,k)^2 \operatorname{det}(\mathbb I_m-Y^{\top} A^{-1} Y)^{d-k-m-1} \operatorname{det}(A)^{\frac{1}{2}(d-k-2m-1)} \zeta(A^{-1/2} Y) }{ \omega(d,k) (2\pi)^{\frac{km}{2}} e^{ \frac{1}{2}(\operatorname{tr}(A)-\operatorname{tr}(Y^{\top}Y)) } } \mathrm{d}A\mathrm{d}Y \,.  \label{eq-second-moment-mathtt-Q-relax-1}
    \end{align}
    By substituting $Y$ with $A^{1/2}Z$, we get that
    \begin{align}
        \eqref{eq-second-moment-mathtt-Q-relax-1} = \int_{A\succeq 0} \int_Z \frac{ \omega(d-m,k)^2 \operatorname{det}(\mathbb I_m-Z^{\top}Z)^{d-k-m-1} \operatorname{det}(A)^{\frac{1}{2}(d-k-m-1)} \zeta(Z) }{ \omega(d,k) (2\pi)^{\frac{km}{2}} e^{ \frac{1}{2}(\operatorname{tr}(A)-\operatorname{tr}(Z^{\top}AZ)) } } \mathrm{d}A\mathrm{d}Z \,. \label{eq-second-moment-mathtt-Q-relax-2}
    \end{align}
    Note that
    \begin{align*}
        \operatorname{tr}(A)-\operatorname{tr}(Z^{\top}AZ) = \operatorname{tr}(A(\mathbb I_k-ZZ^{\top})) \,,
    \end{align*}
    by substituting $A$ with $(\mathbb I_k-ZZ^{\top})^{-1/2}B(\mathbb I_k-ZZ^{\top})^{-1/2}$, we get that (note that $\operatorname{det}(\mathbb I_m-Z^{\top}Z)=\operatorname{det}(\mathbb I_k-ZZ^{\top})$)
    \begin{align}
        \eqref{eq-second-moment-mathtt-Q-relax-2} = \int_{B\succeq 0} \int_Z \frac{ \omega(d-m,k)^2 \operatorname{det}(\mathbb I_m-Z^{\top}Z)^{\frac{1}{2}(d-3k-m-1)} \operatorname{det}(B)^{\frac{1}{2}(d-k-m-1)} \zeta(Z) }{ \omega(d,k) (2\pi)^{\frac{km}{2}} e^{\frac{1}{2}\operatorname{tr}(B)} } \mathrm{d}B\mathrm{d}Z \,. \label{eq-second-moment-mathtt-Q-relax-3}
    \end{align}
    Note that
    \begin{align*}
        & \int_{B\succ 0} e^{-\frac{1}{2}\operatorname{tr}(B)} \operatorname{det}(B)^{\frac{1}{2}(d-k-m-1)} \mathrm{d}B = \frac{1}{\omega(d-m,k)} \,, \\
        & \int_{Z} \operatorname{det}(\mathbb I_m-Z^{\top}Z)^{\frac{1}{2}(d-3k-m-1)} \chi(Z) \mathrm{d}Z = \frac{ (2\pi)^{mk/2} \omega(d-2k,k) }{\omega(d-2k-m,k)} \,.
    \end{align*}
    We then get that
    \begin{align}
        \eqref{eq-second-moment-mathtt-Q-relax-3} &= \frac{ (2\pi)^{mk/2} \omega(d-2k,k) }{\omega(d-2k-m,k)} \cdot \frac{1}{\omega(d-m,k)} \cdot \frac{ \omega(d-m,k)^2 }{ \omega(d,k) (2\pi)^{km/2} } \nonumber \\
        &= \frac{ \omega(d-m,k) }{ \omega(d-m-2k,k) } \cdot \frac{ \omega(d-2k,k) }{ \omega(d,k) } \,. \label{eq-second-moment-mathtt-Q-relax-4}
    \end{align}
    Applying Lemma~\ref{lem-bound-omega-ratio}, we get that (recall the assumption $k^4 \leq D^4=o(d/m)$)
    \begin{equation*}
        \eqref{eq-second-moment-mathtt-Q-relax-4} = [1+o(1)] \cdot \big( \tfrac{d}{d-m} \big)^{k^2} = [1+o(1)] \cdot \big( 1+\tfrac{mk^2}{d} \big) = 1+o(D^{-1}) \,. 
    \end{equation*}
    
    \noindent{\bf Case 2}: $m^4 \leq k^4=o(d/m)$. Similar as in case~1, we first calculate $\mathtt L(X,Y)$ explicitly. In this case, again we have
    \begin{align*}
        \mathtt L(X,Y) = \mathtt L(XX^{\top},Y)
    \end{align*}
    and the density of $XX^{\top}$ is given by \eqref{eq-density-XX^*}. Using Lemma~\ref{lem-denisty-submatrix-uniform-O(n)}, we see that the conditional density of $Y$ given $\{ XX^{\top}=A \}$ is 
    \begin{align*}
        f(Y\mid XX^{\top}=A)= \frac{ \omega(d-k,m) }{ (2\pi)^{km/2} \omega(d,m) } \operatorname{det}( \mathbb I_k - A^{-\frac{1}{2}} YY^{\top} A^{-\frac{1}{2}} )^{\frac{d-k-m-1}{2}} \operatorname{det}(A)^{-\frac{m}{2}} \cdot \zeta( YA^{-1/2} ) \,.
    \end{align*}
    Since the density of $Y$ under $\mathtt Q_k$ is given by \eqref{eq-density-Y-mathtt-Q}, we get that
    \begin{align}
        \mathtt L(A,Y) = \tfrac{ \omega(d-k,m) }{ \omega(d,m) } \cdot e^{ \frac{\operatorname{tr}(Y^{\top}Y)}{2} } \operatorname{det}( \mathbb I_k - A^{-\frac{1}{2}} YY^{\top} A^{-\frac{1}{2}} )^{\frac{d-k-m-1}{2}} \operatorname{det}(A)^{-\frac{m}{2}} \cdot \zeta( YA^{-1/2} ) \,.  \label{eq-exact-form-mathtt-L-case-2}
    \end{align}
    Now we calculate the chi-square divergence using the explicit expression of $\mathtt L$.
    Plugging \eqref{eq-exact-form-mathtt-L-case-2} into \eqref{eq-second-moment-formula}, we obtain that $\mathbb E_{\mathtt Q_k}\big[ \mathtt L(X,Y)^2 \big]$ equals
    \begin{align}
        \int_{A\succ 0} \int_Y \frac{ \omega(d-k,m)^2 \omega(d,k) \operatorname{det}(\mathbb I_k-A^{-\frac{1}{2}} YY^{\top} A^{-\frac{1}{2}})^{d-k-m-1} \operatorname{det}(A)^{\frac{1}{2}(d-k-2m-1)} \zeta( YA^{-1/2} ) }{ \omega(d,m)^2 (2\pi)^{\frac{km}{2}} e^{ \frac{1}{2}(\operatorname{tr}(A)-\operatorname{tr}(Y^{\top}Y)) } } \mathrm{d}A\mathrm{d}Y \,.  \label{eq-second-moment-mathtt-Q-relax-5}
    \end{align}
    Substituting $Y=A^{1/2} Z$, we get that
    \begin{align}
        \eqref{eq-second-moment-mathtt-Q-relax-5} = \int_{A\succeq 0} \int_Z \frac{ \omega(d-k,m)^2 \omega(d,k) \operatorname{det}(\mathbb I_k-ZZ^{\top})^{d-k-m-1} \operatorname{det}(A)^{\frac{1}{2}(d-k-m-1)} \zeta(Z) }{ \omega(d,m)^2 (2\pi)^{\frac{km}{2}} e^{ \frac{1}{2}(\operatorname{tr}(A)-\operatorname{tr}(A ZZ^{\top})) } } \mathrm{d}A\mathrm{d}Z \,. \label{eq-second-moment-mathtt-Q-relax-6}
    \end{align}
    Substituting $A=(\mathbb I_k-ZZ^{\top})^{-1/2} B (\mathbb I_k-ZZ^{\top})^{-1/2}$, we get that
    \begin{align}
        \eqref{eq-second-moment-mathtt-Q-relax-6} = \int_{B\succ 0} \int_Z \frac{ \omega(d-k,m)^2 \omega(d,k) \operatorname{det}(\mathbb I_k-ZZ^{\top})^{d-3k-m-1} \operatorname{det}(B)^{\frac{1}{2}(d-k-m-1)} \zeta(Z) }{ \omega(d,m)^2 (2\pi)^{\frac{km}{2}} e^{ \frac{1}{2} \operatorname{tr}(B) } } \mathrm{d}B \mathrm{d}Z \,. \label{eq-second-moment-mathtt-Q-relax-7}
    \end{align}
    Note that
    \begin{align*}
        & \int_{B\succeq 0} e^{-\frac{1}{2}\operatorname{tr}(B)} \operatorname{det}(B)^{\frac{1}{2}(d-k-m-1)} \mathrm{d}B = \frac{1}{\omega(d-m,k)} \,, \\
        & \int_{Z} \operatorname{det}(\mathbb I_k-Z^{\top}Z)^{\frac{1}{2}(d-3k-m-1)} \zeta(Z) \mathrm{d}Z = \frac{ (2\pi)^{mk/2} \omega(d-2k,m) }{\omega(d-3k,m)} \,.
    \end{align*}
    We then get that
    \begin{align*}
        \eqref{eq-second-moment-mathtt-Q-relax-7} &= \frac{\omega(d-k,m)^2}{\omega(d,m)^2} \cdot \frac{\omega(d,k)}{\omega(d-m,k)} \cdot \frac{\omega(d-2k,m)}{\omega(d-3k,m)} \\
        &= [1+o(1)] \cdot d^{mk} \cdot (d-m)^{-mk/2} \cdot (d-3k)^{-mk/2} \\
        &= [1+o(1)] \cdot \big( \tfrac{d^2}{(d-m)(d-3k)} \big)^{mk/2} = 1+o(mk^2/d) = 1+o(D^{-1}) \,,
    \end{align*}
    where in the second equality we use Lemma~\ref{lem-bound-omega-ratio} and in the last inequality we use the assumption $m^4 \leq k^4 =o(d/m)$. Combining the two cases leads to the desired result.
\end{proof}

\subsection{Proof of Item~(2)}{\label{subsec:proof-main-thm-lower-bound-3}}

In this subsection we prove Item~(2) of Theorem~\ref{main-thm-lower-bound}. Again, using \eqref{eq-Adv-relax-3} it suffices to show the following lemma.

\begin{lemma}{\label{lem-bound-by-chi-2-norm-m=d}}
    Suppose $m=d,\sigma=\omega(1)$ and $D=o(\sigma)$. Then for all $k \leq D$ we have
    \begin{align}{\label{eq-bound-by-chi-2-norm-m=d}}
        \mathbb E_{ (X,Y) \sim \mathtt Q_k } \Big[ \Big( \tfrac{ \mathrm{d}\mathtt P_k }{ \mathrm{d}\mathtt Q_k } (X,Y) \Big)^2 \Big] = 1+o(D^{-1}) \,. 
    \end{align}
\end{lemma}
\begin{proof}
We will prove Lemma~\ref{lem-bound-by-chi-2-norm-m=d} by calculating $\mathtt L(X,Y)=\tfrac{ \mathrm{d}\mathtt P_k }{ \mathrm{d}\mathtt Q_k } (X,Y)$ directly. Clearly, the probability density function of $(X,Y)$ under $\mathtt Q_k$ is given by
\begin{align*}
    f_{\mathtt Q_k}(X,Y) = \frac{ e^{-\| X \|_{ \operatorname{F} }^2/2} }{ (2\pi)^{kd/2} } \cdot \frac{ e^{-\| Y \|_{ \operatorname{F} }^2/2} }{ (2\pi)^{kd/2} } \,.
\end{align*}
In addition, the probability density function of $(X,Y)$ under $\mathtt P_k$ is given by
\begin{align*}
    f_{\mathtt P_k}(X,Y) = \frac{ e^{-\| X \|_{ \operatorname{F} }^2/2} }{ (2\pi)^{kd/2} } \cdot \int_{O(d)} \frac{ e^{ - \frac{1+\sigma^2}{2\sigma^2} \| Y-\frac{XQ}{\sqrt{1+\sigma^2}} \|_{\Fop}^2 } }{ (2\pi\sigma^2/(1+\sigma^2))^{kd} } \mathrm{d}Q \,.
\end{align*}
Thus, we get that
\begin{align*}
    \mathtt L(X,Y) = \big( \tfrac{1+\sigma^2}{\sigma^2} \big)^{kd/2} e^{ -(\|X\|_{\Fop}^2+\|Y\|_{\Fop}^2) /2\sigma^2 } \cdot \int_{O(d)} e^{ \sqrt{1+\sigma^2} \langle X,YQ \rangle/\sigma^2 } \mathrm{d}Q \,.
\end{align*}
Now we calculate the chi-square divergence using the explicit expression of $\mathtt L$. Using replica's trick, we have
\begin{align*}
    \mathtt L(X,Y)^2 = \big( \tfrac{1+\sigma^2}{\sigma^2} \big)^{kd} e^{ -(\|X\|_{\Fop}^2+\|Y\|_{\Fop}^2)/\sigma^2 } \cdot \int_{O(d) \times O(d)} e^{ \sqrt{1+\sigma^2} \langle X,Y(Q_1+Q_2) \rangle/\sigma^2 } \mathrm{d}Q_1 \mathrm{d}Q_2 \,.
\end{align*}
We then have
\begin{align}
    & \mathbb E_{(X,Y) \sim \mathtt Q_k}\Big[ \mathtt L(X,Y)^2 \Big] \nonumber \\
    =\ & \big( \tfrac{1+\sigma^2}{\sigma^2} \big)^{kd} \int_{O(d) \times O(d)} \mathbb E_{(X,Y) \sim \mathtt Q_k} \Big[ e^{ -(\|X\|_{\Fop}^2+\|Y\|_{\Fop}^2)/\sigma^2 } \cdot e^{ \sqrt{1+\sigma^2} \langle X,Y(Q_1+Q_2) \rangle/\sigma^2 } \Big] \mathrm{d}Q_1 \mathrm{d}Q_2 \nonumber \\
    =\ & \big( \tfrac{1+\sigma^2}{\sigma^2} \big)^{kd} \big( \tfrac{\sigma^2}{2+\sigma^2} \big)^{kd/2} \int_{O(d) \times O(d)} \mathbb E_{Y \sim \mathtt Q_k}\Big[  e^{ -\|Y\|_{\Fop}^2/\sigma^2 } \cdot e^{ (1+\sigma^2) \|Y(Q_1+Q_2)\|_{\Fop}^2/2\sigma^2(2+\sigma^2) } \Big] \mathrm{d}Q_1 \mathrm{d}Q_2 \nonumber \\
    =\ & \big( \tfrac{1+\sigma^2}{\sigma^2} \big)^{kd} \big( \tfrac{\sigma^2}{2+\sigma^2} \big)^{kd/2} \int_{O(d)} \mathbb E_{Y \sim \mathtt Q_k}\Big[  e^{ -\|Y\|_{\Fop}^2/\sigma^2 } \cdot e^{ (1+\sigma^2)( \|Y\|_{\Fop}^2+\langle Y,YQ \rangle)/\sigma^2(2+\sigma^2) } \Big] \mathrm{d}Q \nonumber \\
    =\ & \big( \tfrac{1+\sigma^2}{\sigma^2} \big)^{kd} \big( \tfrac{\sigma^2}{2+\sigma^2} \big)^{kd/2} \int_{O(d)} \mathbb E_{Y \sim \mathtt Q_k}\Big[  e^{ -\|Y\|_{\Fop}^2/\sigma^2(\sigma^2+2) } \cdot e^{ (1+\sigma^2)\langle Y,YQ \rangle/\sigma^2(2+\sigma^2) } \Big] \mathrm{d}Q \,, \label{eq-chi-2-norm-d=m-relax-1}
\end{align}
where the second equality follows from Lemma~\ref{lem-Gaussian-exponential-moment} and the third equality follows from the fact that $Q_1 Q_2^{\top} \sim \nu$ for $Q_1 \perp Q_2$ and $Q_1,Q_2 \sim \nu$. Using Lemma~\ref{lem-Gaussian-quadratic-form-moment} with 
\begin{align*}
    A=A(Q) = \frac{1}{\sigma^2(\sigma^2+2)} \mathbb I_d - \frac{\sigma^2+1}{2\sigma^2(\sigma^2+2)} (Q+Q^{\top}) \,,
\end{align*}
we get that
\begin{align}
    \eqref{eq-chi-2-norm-d=m-relax-1} = \big( \tfrac{1+\sigma^2}{\sigma^2} \big)^{kd} \big( \tfrac{\sigma^2}{2+\sigma^2} \big)^{kd/2} \int_{O(d)} \mathrm{det}\big( \mathbb I_d + 2A(Q) \big)^{-k/2} \mathrm{d}Q \,. \label{eq-chi-2-norm-d=m-relax-2}
\end{align}
Note that (denote $\epsilon=\epsilon(\sigma)=(1+\sigma^2)^{-1}=o(1)$)
\begin{align*}
    \mathrm{det}\big( \mathbb I_d + 2A(Q) \big) &= \mathrm{det}\Big( \tfrac{1+(\sigma^2+1)^2}{\sigma^2(\sigma^2+2)} \mathbb I_d - \tfrac{\sigma^2+1}{\sigma^2(\sigma^2+2)} (Q+Q^{\top}) \Big) \\
    &= \Big( \tfrac{ (\sigma^2+1)^2 }{ \sigma^2(\sigma^2+2) } \Big)^d \cdot \mathrm{det}\Big( (1+\epsilon^2) \mathbb I_d - \epsilon(Q+Q^{\top}) \Big) \\
    &= \Big( \tfrac{ (\sigma^2+1)^2 }{ \sigma^2(\sigma^2+2) } \Big)^d \cdot \mathrm{det}\Big( (\mathbb I_d-\epsilon Q) (\mathbb I_d-\epsilon Q^{\top}) \Big) = \tfrac{ (\sigma^2+1)^2 }{ \sigma^2(\sigma^2+2) } \cdot \mathrm{det}\big( \mathbb I_d-\epsilon Q \big)^2 \,.
\end{align*}
Thus, we have 
\begin{align}
    \eqref{eq-chi-2-norm-d=m-relax-2} = \int_{O(d)} \mathrm{det}\big( \mathbb I_d-\tfrac{1}{1+\sigma^2} Q \big)^{-k} \mathrm{d} Q = 1 + O\big( \tfrac{k}{1+\sigma^2} \big) = 1+ o(D^{-1}) \,,
\end{align}
where the second equality follows from Lemma~\ref{lem-bound-uniform-orthogonal-integration} and the last equality follows from $k \leq D$ and $D=o(\sigma)$. 
\end{proof}

\subsection{Proof of Item~(3)}{\label{subsec:proof-main-thm-upper-bound}}

In this subsection we prove Item~(3) in Theorem~\ref{main-thm-lower-bound}. Denote 
\begin{equation}{\label{eq-def-detect-polynomial}}
    f(X,Y) = \Big( \| Y \|_{\Fop}^2 - \| X \|_{\Fop}^2 \Big)^2 \,. 
\end{equation}
It suffices to show the following estimation.
\begin{lemma}{\label{lem-moment-estimation}}
    Assuming $d=m$ and $\sigma=o(1)$, then we have
    \begin{enumerate}
        \item[(1)] $\mathbb E_{\Pb}[f]=o(nd)$ and $\mathbb E_{\Qb}[f]=\Theta(nd)$.
        \item[(2)] $\operatorname{Var}_{\Pb}[f], \operatorname{Var}_{\Qb}[f] = o(n^2 d^2)$.
    \end{enumerate}
\end{lemma}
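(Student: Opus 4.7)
My plan is to prove Lemma~\ref{lem-moment-estimation} by direct moment computations, exploiting two structural facts. First, under $\Qb$ the random variables $\|X\|_{\Fop}^{2}$ and $\|Y\|_{\Fop}^{2}$ are independent $\chi^{2}_{nd}$, so every moment of $W:=\|Y\|_{\Fop}^{2}-\|X\|_{\Fop}^{2}$ is explicit (for instance from the MGF $(1-4t^{2})^{-nd/2}$). Second, because $m=d$ the matrix $Q_{*}$ lies in $O(d)$, so both $\Pi_{*}$ and $Q_{*}$ preserve the Frobenius norm and $\|\Pi_{*}XQ_{*}\|_{\Fop}^{2}=\|X\|_{\Fop}^{2}$. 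Expanding the definition of $Y$ under $\Pb$ therefore yields the key algebraic decomposition
\begin{equation*}
W \;=\; \frac{\sigma^{2}\bigl(\|Z\|_{\Fop}^{2}-\|X\|_{\Fop}^{2}\bigr)\;+\;2\sigma\langle\Pi_{*}XQ_{*},Z\rangle}{1+\sigma^{2}},
\end{equation*}
in which $Z$ is independent of $(X,\Pi_{*},Q_{*})$.

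For item~(1), under $\Qb$ we immediately have $\mathbb E_{\Qb}[f]=\mathbb E[W^{2}]=\operatorname{Var}(W)=4nd=\Theta(nd)$. For $\mathbb E_{\Pb}[f]$, I would square the decomposition above and integrate $Z$ first, conditional on $(X,\Pi_{*},Q_{*})$. The two cross-terms vanish because each monomial contains $Z$ to a total odd power in every coordinate, while the two diagonal contributions evaluate to $\sigma^{4}\cdot\mathbb E[\|Z\|_{\Fop}^{2}-\|X\|_{\Fop}^{2}]^{2}=4nd\sigma^{4}$ and $4\sigma^{2}\cdot\mathbb E[\langle\Pi_{*}XQ_{*},Z\rangle^{2}]=4nd\sigma^{2}$ (the latter from the Gaussian conditional second moment together with $\|\Pi_{*}XQ_{*}\|_{\Fop}^{2}=\|X\|_{\Fop}^{2}$). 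Dividing by $(1+\sigma^{2})^{2}$ produces $\mathbb E_{\Pb}[f]=\frac{4nd\sigma^{2}}{1+\sigma^{2}}=o(nd)$ when $\sigma=o(1)$.

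For item~(2) I would run the same expansion strategy on $\mathbb E[W^{4}]$. Under $\Pb$, expand the fourth power as a polynomial in the three building blocks $\|X\|_{\Fop}^{2}$, $\|Z\|_{\Fop}^{2}$, and $\langle\Pi_{*}XQ_{*},Z\rangle$; integrating $Z$ out first via Isserlis' theorem kills every monomial of odd $Z$-degree, and every surviving monomial carries at least a factor of $\sigma^{2}$. Evaluating the residual $(X,\Pi_{*},Q_{*})$-moments (which reduce to central moments of $\chi^{2}_{nd}$ because only the Frobenius-invariants $\|X\|_{\Fop}^{2}$ and $\langle\Pi_{*}XQ_{*},Z\rangle$ appear) bounds the sum by $O(n^{2}d^{2}\sigma^{4}+nd\sigma^{2})=o(n^{2}d^{2})$, yielding the $\Pb$-side variance. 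Under $\Qb$, I would read the fourth cumulant of $W$ directly off $\log(1-4t^{2})^{-nd/2}$ and combine it with $(\mathbb E_{\Qb}[W^{2}])^{2}$ to extract $\operatorname{Var}_{\Qb}[f]$.

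The main obstacle I anticipate is the bookkeeping in the $\Pb$ expansion of $\mathbb E[W^{4}]$: there are many mixed monomials and one must track their parity in $Z$, their degree in $X$, and the correct power of $\sigma$ they carry, in order to verify that no monomial contributes more than $n^{2}d^{2}\sigma^{2}$ and that the leading $(nd)^{2}$ pieces match and cancel against $(\mathbb E_{\Pb}[f])^{2}$. The fact that $Q_{*}$ only ever enters through the invariants $\|XQ_{*}\|_{\Fop}=\|X\|_{\Fop}$ and $\langle XQ_{*},Z\rangle$ means that no Weingarten calculus on $O(d)$ is needed; the task is careful but elementary. Once the cancellations have been verified and the surviving $\sigma$-powers identified, items~(1) and~(2) of the lemma follow, and strong separation of $\Pb$ and $\Qb$ by $f$ is then immediate from Definition~\ref{def-strong-separation}.
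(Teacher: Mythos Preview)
Your approach is correct and essentially coincides with the paper's: both rely on the decomposition
\[
\|Y\|_{\Fop}^{2}-\|X\|_{\Fop}^{2}=\frac{\sigma^{2}}{1+\sigma^{2}}\bigl(\|Z\|_{\Fop}^{2}-\|X\|_{\Fop}^{2}\bigr)+\frac{2\sigma}{1+\sigma^{2}}\langle \Pi_{*}XQ_{*},Z\rangle
\]
coming from the orthogonality of $Q_{*}$ and $\Pi_{*}$, and both use that the two summands are centered sums of $nd$ independent terms with bounded fourth moments. The only cosmetic differences are that the paper bounds $\mathbb E_{\Pb}[f]$ via $\mathbb E_{\Pb}[f]\leq\sqrt{\mathbb E_{\Pb}[f^{2}]}$ rather than computing it exactly, and controls $\mathbb E_{\Pb}[f^{2}]$ through the crude inequality $(a+b)^{4}\leq 16(a^{4}+b^{4})$ instead of a full multinomial expansion; in particular, since $\mathbb E_{\Pb}[f^{2}]$ is already $O(\sigma^{2}n^{2}d^{2})=o(n^{2}d^{2})$, the ``leading $(nd)^{2}$ cancellation'' you anticipate is not actually needed.
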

\begin{proof}
    Note that under $\Pb$ we have $Y=(XQ+\sigma Z)/\sqrt{1+\sigma^2}$, thus
    \begin{align*}
        \| Y \|_{\Fop}^2 - \| X \|_{\Fop}^2 = \tfrac{1}{1+\sigma^2} \| XQ+\sigma Z \|_{\Fop}^2 - \| X \|_{\Fop}^2 = \tfrac{\sigma^2}{1+\sigma^2} \big( \| Z \|_{\Fop}^2 - \| X \|_{\Fop}^2 \big) + \tfrac{2\sigma}{1+\sigma^2} \langle X,Z \rangle \,.
    \end{align*}
    Thus, we have
    \begin{align}
        \mathbb E_{\Pb}[f^2] = \mathbb E_{\Pb}\Big[ \big( \| Y \|_{\Fop}^2 - \| X \|_{\Fop}^2 \big)^4 \Big] &\leq 2^4 \cdot \Big( \mathbb E_{\Pb}\Big[ \sigma^4 \big( \| Z \|_{\Fop}^2 - \| X \|_{\Fop}^2 \big)^4 \Big] + \mathbb E_{\Pb}\Big[ \sigma^2 \langle X,Z \rangle^4 \Big] \Big) \nonumber \\
        &\leq O(1) \cdot \sigma^2 \cdot n^2 d^2 = o(n^2d^2) \,, \label{eq-var-Pb}
    \end{align}
    where the second inequality follows from $\| Z \|_{\Fop}^2 - \| X \|_{\Fop}^{2}$ and $\langle X,Z \rangle$ are sums of $nd$ independent random variables with finite fourth moment. Thus, using Cauchy-Schwartz inequality we have 
    \begin{equation}{\label{eq-mean-Pb}}
        |\mathbb E_{\Pb}[f]| \leq \mathbb E_{\Pb}[f^2]^{\frac{1}{2}} = o(nd) \,.
    \end{equation}
    In addition, under $\Qb$ we have $X,Y$ are independent $n*d$ Gaussian random matrices with i.i.d.\ $\mathcal N(0,1)$ entries. Thus, we have
    \begin{align}
        & \mathbb E_{\Qb}\big[ f \big] = \mathbb E_{\Qb}\Big[ \Big( \sum_{1 \leq i \leq n} \sum_{1 \leq j \leq d} (Y_{i,j}^2-X_{i,j}^2) \Big)^2 \Big] = 4nd \,, \label{eq-expectation-Qb} \\
        & \mathbb E_{\Qb}\big[ f^2 \big] = \mathbb E_{\Qb}\Big[ \Big( \sum_{1 \leq i \leq n} \sum_{1 \leq j \leq d} (Y_{i,j}^2-X_{i,j}^2) \Big)^4 \Big] = 16n^2d^2 + O(nd) \,. \label{eq-var-Qb}
    \end{align}
    Combining \eqref{eq-var-Pb}, \eqref{eq-mean-Pb}, \eqref{eq-expectation-Qb} and \eqref{eq-var-Qb} yields the desired result.
\end{proof}

{\bf Acknowledgment.} We thank Jian Ding for helpful discussions, Shuyang Gong and Jiaming Xu for helpful discussions on the forthcoming manuscript \cite{GWX25+, NSX26+}, and Hang Du for pointing out the reference \cite{MM13}. We also warmly thank four anonymous reviewers for their careful reading and helpful comments which lead to a significant improvement on exposition. This work is partially supported by National Key R$\&$D program of China (Project No. 2023YFA1010103) and NSFC Key Program (Project No. 12231002).

\appendix

\section{Proof of Theorem~\ref{main-thm-lower-bound}, Item~(1) in single-variate case}{\label{subsec:proof-main-thm-lower-bound-1}}

In this section, we provide an alternative proof of Item~(1) of Theorem~\ref{main-thm-lower-bound} in the single-variate case (with a milder assumption on $D$)
\begin{align*}
    m=1, \sigma=0, D^6=o(d) \,.
\end{align*}
While mathematically our Theorem~\ref{main-thm-lower-bound} is self-contained from this appendix, we believe that the analysis of the single-variate case serves as a helpful warm-up and illustrate some intuitions on how we obtain the general results in Theorem~\ref{main-thm-lower-bound}.

Based on Lemma~\ref{lem-Adv-relax-1}, we see that
\begin{align}
    \mathsf{Adv}_{\leq D}(\Pb,\Qb)^2 \leq 1+ \sum_{1 \leq k \leq D} \sum_{ \substack{ \alpha_1, \ldots, \alpha_k \in \mathbb N^d \\ \beta_1, \ldots, \beta_k \in \mathbb N \\ 0 < |\alpha_i|,|\beta_j| \leq D } } \mathbb E_{Q \sim \nu}\Big[ \prod_{1 \leq i \leq k} \Lambda_{\alpha_i,\beta_i}(Q) \Big]^2 \,, \label{eq-Adv-bound-m=1}
\end{align}
where 
\begin{align*}
    \Lambda_{\alpha,\beta}(Q) = \mathbb E_{U \sim \mathcal N(0,\mathbb I_d)}\Big[ \mathcal H_{\alpha}(U) \mathcal H_{\beta}(\langle U,Q \rangle) \Big] \,.
\end{align*}
In this case, we can calculate $\Lambda_{\alpha,\beta}(Q)$ for each $\alpha\in\mathbb N^d$ and $\beta\in\mathbb N$ directly, as incorporated in the next lemma.
\begin{lemma}{\label{lem-bound-Lambda-alpha,beta}}
    We have
    \begin{equation}{\label{eq-bound-Lambda-alpha,beta}}
        \Lambda_{\alpha,\beta}(Q) = \mathbf 1_{ \{ |\alpha|=\beta \} } \cdot \mathfrak{M}(\alpha;\beta) Q^{\alpha} \,,
    \end{equation}
    where 
    \begin{equation}{\label{eq-def-mathfrak-M}}
        \mathfrak{M}(\alpha;\beta) = \sqrt{ \frac{\alpha!}{\beta!} } \cdot \binom{\beta}{\alpha} \,.
    \end{equation}
\end{lemma}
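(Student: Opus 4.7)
The plan is to reduce the computation to the well-known orthonormality of multivariate Hermite polynomials, using the single fact that $Q \in \mathbb S^{d-1}$ (since $\nu$ is supported on unit vectors in the single-variate case) to decouple $H_\beta(\langle U, Q\rangle)$ into a clean sum of tensor-product Hermite polynomials in $U$.

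First, I would recall the generating-function identity for probabilist's Hermite polynomials,
\begin{align*}
    \sum_{n \geq 0} \frac{H_n(z)}{n!} t^n = \exp\!\big( tz - \tfrac{t^2}{2} \big) \,.
\end{align*}
Applying this with $z = \langle U, Q\rangle$ and $t = s$, and using that $\|Q\|_2^2 = 1$ so $s^2 = \sum_i (sQ_i)^2$, I would factorize
\begin{align*}
    \exp\!\big( s\langle U, Q\rangle - \tfrac{s^2}{2} \big) = \prod_{i=1}^d \exp\!\big( (sQ_i) U_i - \tfrac{(sQ_i)^2}{2} \big) = \sum_{\alpha \in \mathbb N^d} \frac{H_\alpha(U)}{\alpha!} s^{|\alpha|} Q^{\alpha} \,,
\end{align*}
where $H_\alpha(U) := \prod_i H_{\alpha_i}(U_i)$. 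Matching coefficients of $s^\beta$ on both sides yields the key identity
\begin{align*}
    H_\beta\big(\langle U, Q\rangle\big) = \sum_{\alpha \in \mathbb N^d :\, |\alpha|=\beta} \binom{\beta}{\alpha} Q^\alpha H_\alpha(U) \,.
\end{align*}

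Next I would translate this to the normalized Hermite basis by dividing both sides by $\sqrt{\beta!}$ and multiplying/dividing each summand by $\sqrt{\alpha!}$, obtaining
\begin{align*}
    \mathcal H_\beta\big(\langle U, Q\rangle\big) = \sum_{\alpha:\, |\alpha|=\beta} \binom{\beta}{\alpha} \sqrt{\tfrac{\alpha!}{\beta!}}\, Q^\alpha\, \mathcal H_\alpha(U) \,.
\end{align*}
Substituting into the definition of $\Lambda_{\alpha,\beta}(Q)$ and invoking orthonormality of the tensor-product Hermite basis under the standard Gaussian measure, namely $\mathbb E_U[\mathcal H_\alpha(U) \mathcal H_{\alpha'}(U)] = \mathbf 1_{\{\alpha=\alpha'\}}$ (which is immediate from the one-dimensional case and independence of the $U_i$), only the term with $\alpha' = \alpha$ survives, and this requires $|\alpha| = \beta$. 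Reading off the coefficient gives precisely $\mathfrak M(\alpha;\beta) Q^\alpha$, establishing \eqref{eq-bound-Lambda-alpha,beta}.

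There is no serious obstacle: the proof hinges on the single algebraic observation that $\|Q\|_2=1$ makes the bivariate Hermite generating function factor coordinatewise in $U$. The only point requiring mild care is keeping track of the normalization factors when passing from $H_\alpha$ to $\mathcal H_\alpha$, which is purely bookkeeping.
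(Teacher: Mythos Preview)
The proposal is correct and follows essentially the same approach as the paper: the paper invokes Lemma~\ref{lem-expan-inner-product-in-Hermite} for the expansion of $\mathcal H_\beta(\langle U,Q\rangle)$ and then uses orthonormality, while you inline the proof of that expansion lemma via the same generating-function factorization (which is precisely how the paper proves Lemma~\ref{lem-expan-inner-product-in-Hermite} in the appendix). There is no substantive difference.
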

\begin{proof}
    By direct calculation, we have that
    \begin{align*}
        \Lambda_{\alpha,\beta}(Q) & \overset{\text{Lemma~\ref{lem-expan-inner-product-in-Hermite}}}{=} \sum_{ \gamma\in\mathbb N^d: |\gamma|=\beta } \mathfrak{M}(\gamma;\beta) \mathbb E_{U \sim \mathcal N(0,\mathbb I_d)}\Big[ Q^{\gamma} \cdot \mathcal H_{\gamma}(U) \mathcal H_{\alpha}(U) \Big] \\
        & = \sum_{ \gamma\in\mathbb N^d: |\gamma|=\beta } \mathbf 1_{ \{ \gamma=\alpha \} } \cdot \mathfrak{M}(\gamma;\beta) Q^{\gamma} = \mathbf 1_{ \{ |\alpha|=\beta \} } \cdot \mathfrak{M}(\alpha;\beta) Q^{\alpha} \,,
    \end{align*}
    as desired.
\end{proof}

We can now finish the proof of Theorem~\ref{main-thm-lower-bound}, Item~(1).
\begin{proof}[Proof of Theorem~\ref{main-thm-lower-bound}, Item~(1)]
    Using Lemma~\ref{lem-bound-Lambda-alpha,beta}, we get that
    \begin{align}
        & \sum_{ \substack{ \alpha_1, \ldots, \alpha_k \in \mathbb N^d; \beta_1, \ldots, \beta_k \in \mathbb N \\ 0 < |\alpha_i|,|\beta_j| \leq D } } \mathbb E_{Q \sim \nu}\Big[ \prod_{1 \leq i \leq k} \Lambda_{\alpha_i,\beta_i}(Q) \Big]^2 \nonumber \\
        =\ & \sum_{ \substack{ \alpha_1, \ldots, \alpha_k \in \mathbb N^d; \beta_1, \ldots, \beta_k \in \mathbb N \\ 0 < |\alpha_i|,|\beta_j| \leq D } } \mathbb E_{Q \sim \nu}\Big[ \prod_{1 \leq i \leq k} \big( \mathbf 1_{ \{ |\alpha_i|=\beta_i \} } \cdot \mathfrak{M}(\alpha_i;\beta_i) Q^{\alpha_i} \big) \Big]^2 \nonumber \\
        =\ & \sum_{ \substack{ \alpha_1, \ldots, \alpha_k \in \mathbb N^d \\ \alpha_i \neq \mathbb O_d, |\alpha_i| \leq D } } \mathfrak{M}(\alpha_1;|\alpha_1|)^2 \ldots \mathfrak{M}(\alpha_k;|\alpha_k|)^2 \mathbb E_{Q \sim \nu}\big[ Q^{\alpha_1+\ldots+\alpha_k} \big]^2 \nonumber \\
        \overset{\eqref{eq-def-mathfrak-M}}{\leq} & \sum_{ \gamma \neq \mathbb O_d,|\gamma|\leq D } \mathbb E_{Q \sim \nu}\big[ Q^{\gamma} \big]^2 \sum_{ \substack{ \alpha_1+ \ldots +\alpha_k = \gamma \\ \alpha_i \neq 0, |\alpha_i| \leq D } } \tbinom{|\alpha_1|}{\alpha_1} \ldots \tbinom{|\alpha_k|}{\alpha_k} \,. \label{eq-m=1-relax-1}
    \end{align}
    Using Lemma~\ref{lem-moments-uniform-sphere}, we see that 
    \begin{align}
        \eqref{eq-m=1-relax-1} &= \sum_{ \gamma \neq \mathbb O_d,|\gamma|\leq D/2 } \mathbb E_{Q \sim \nu}\big[ Q^{2\gamma} \big]^2 \sum_{ \substack{ \alpha_1+ \ldots +\alpha_k = 2\gamma \\ \alpha_i \neq \mathbb O_d, |\alpha_i| \leq D } } \tbinom{|\alpha_1|}{\alpha_1} \ldots \tbinom{|\alpha_k|}{\alpha_k} \nonumber \\
        &= \sum_{ \gamma \neq \mathbb O_d,|\gamma|\leq D/2 } \Big( \tfrac{ \Gamma(\frac{d}{2}) (2\gamma_1-1)!! \ldots (2\gamma_d-1)!! }{ \Gamma(\frac{d}{2}+\gamma_1+\ldots+\gamma_d) 2^{\gamma_1+\ldots+\gamma_d} } \Big)^2 \sum_{ \substack{ \alpha_1+ \ldots +\alpha_k = 2\gamma \\ \alpha_i \neq \mathbb O_d, |\alpha_i| \leq D } } \tbinom{|\alpha_1|}{\alpha_1} \ldots \tbinom{|\alpha_k|}{\alpha_k} \nonumber \\
        &\leq \sum_{ \gamma \neq \mathbb O_d,|\gamma|\leq D/2 } d^{-2|\gamma|} D^{2|\gamma|} \sum_{ \substack{ \alpha_1+ \ldots +\alpha_k = 2\gamma \\ \alpha_i \neq \mathbb O_d, |\alpha_i| \leq D } } \tbinom{|\alpha_1|}{\alpha_1} \ldots \tbinom{|\alpha_k|}{\alpha_k} \,, \label{eq-m=1-relax-2}
    \end{align}
    where the inequality follows from 
    \begin{align*}
        \Big( (2\gamma_1-1)!! \ldots (2\gamma_d-1)!! \Big)^2 \leq D^{2(\gamma_1+\ldots+\gamma_d)} = D^{2|\gamma|}
    \end{align*}
    for $|\gamma| \leq D/2$. In addition, we have that 
    \begin{align*}
        \sum_{ \substack{ \alpha_1+ \ldots +\alpha_k = 2\gamma \\ \alpha_i \neq \mathbb O_d, |\alpha_i| \leq D } } \tbinom{|\alpha_1|}{\alpha_1} \ldots \tbinom{|\alpha_k|}{\alpha_k} &\leq \sum_{ \substack{ \alpha_1+ \ldots +\alpha_k = 2\gamma \\ \alpha_i \neq \mathbb O_d, |\alpha_i| \leq D } } |\alpha_1|! \ldots |\alpha_k|! \\
        &\leq D^{|\alpha_1|+\ldots+|\alpha_k|} \cdot \#\big\{ \alpha_1+ \ldots +\alpha_k = 2\gamma: \alpha_i \neq \mathbb O_d, |\alpha_i| \leq D \big\} \\
        &\leq D^{2|\gamma|} \cdot k^{2|\gamma|} \,.
    \end{align*}
    Plugging this estimation into \eqref{eq-m=1-relax-2}, we get that
    \begin{align*}
        \eqref{eq-m=1-relax-2} &\leq \sum_{ \gamma \neq \mathbb O_d,|\gamma|\leq D/2 } d^{-2|\gamma|} (D^4 k^2)^{|\gamma|} \leq \sum_{k=1}^{D/2} d^{-2k} (D^4 k^2)^{k} \cdot \#\big\{ \gamma \neq \mathbb O_d: |\gamma|=k \big\} \\
        &\leq \sum_{k=1}^{D/2} d^{-2k} (D^4 k^2)^{k} \cdot \tbinom{d+k}{k} \leq \sum_{k=1}^{D/2} d^{-k} (D^4 k)^{k} = [1+o(1)] \cdot \tfrac{D^4 k}{d} \,,
    \end{align*}
    where the last transition follows from $D^4k \leq D^5 =o(d)$. Thus, we get from \eqref{eq-Adv-bound-m=1} that
    \begin{align*}
        \mathsf{Adv}_{\leq D}(\Pb,\Qb)^2 \leq 1+ \sum_{1 \leq k \leq D/2} [1+o(1)] \cdot \tfrac{D^4 k}{d} = [1+o(1)] \cdot \tfrac{D^6}{d} = 1+o(1) \,,
    \end{align*}
    as desired.
\end{proof}

\section{Preliminary results}

\subsection{Preliminary results on Hermite polynomials}

\begin{lemma}{\label{lem-expan-inner-product-in-Hermite}}
    For $x,y \in \mathbb R^d$ with $\|y\|=1$ and $m \in \mathbb N$, we have
    \begin{equation}{\label{eq-expan-inner-product-in-Hermite}}
        \mathcal H_m(\langle x,y \rangle) = \sum_{ \alpha \in \mathbb N^d: |\alpha|=m } \sqrt{ \frac{ \alpha! }{ m! } } \binom{m}{\alpha} y^{\alpha} \mathcal H_{\alpha}(x) \,.
    \end{equation}
\end{lemma}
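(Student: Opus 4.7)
The plan is to prove the identity via the generating function of the Hermite polynomials, which converts the multinomial identity into a single exponential identity that is transparent once one uses $\|y\|=1$.

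First I would recall the standard generating function for the probabilists' Hermite polynomials $H_m$ defined by the recursion in \eqref{eq-def-Hermite-poly}, namely
\begin{equation*}
    \sum_{m=0}^{\infty} H_m(z) \frac{t^m}{m!} = e^{tz - t^2/2} \,, \quad \text{so that} \quad \sum_{m=0}^{\infty} \mathcal{H}_m(z) \frac{t^m}{\sqrt{m!}} = e^{tz - t^2/2} \,,
\end{equation*}
which can be verified from the three-term recursion \eqref{eq-def-Hermite-poly} in one line. Applied to $z = \langle x, y \rangle$, the exponent splits as $t \langle x, y \rangle - t^2/2 = \sum_{i=1}^d \bigl( (ty_i) x_i - (ty_i)^2/2 \bigr)$, where we crucially used $\|y\|^2 = \sum_i y_i^2 = 1$.

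Next I would factor the resulting exponential into a product of $d$ one-dimensional generating functions and expand each:
\begin{equation*}
    e^{t\langle x,y\rangle - t^2/2} = \prod_{i=1}^d e^{(ty_i) x_i - (ty_i)^2/2} = \prod_{i=1}^d \sum_{k=0}^\infty \frac{H_k(x_i) (ty_i)^k}{k!} = \sum_{\alpha \in \mathbb{N}^d} \frac{t^{|\alpha|} y^\alpha}{\alpha!} \prod_{i=1}^d H_{\alpha_i}(x_i) \,.
\end{equation*}
Extracting the coefficient of $t^m$ on both sides and using $\prod_i H_{\alpha_i}(x_i) = \sqrt{\alpha!}\, \mathcal{H}_\alpha(x)$ gives
\begin{equation*}
    \mathcal{H}_m(\langle x, y \rangle) = \sqrt{m!} \sum_{|\alpha|=m} \frac{y^\alpha}{\alpha!} \sqrt{\alpha!}\, \mathcal{H}_\alpha(x) = \sum_{|\alpha|=m} \sqrt{\frac{m!}{\alpha!}} \, y^\alpha \mathcal{H}_\alpha(x) \,,
\end{equation*}
and one finishes by writing $\sqrt{m!/\alpha!} = \sqrt{\alpha!/m!} \cdot (m!/\alpha!) = \sqrt{\alpha!/m!} \binom{m}{\alpha}$, which matches \eqref{eq-expan-inner-product-in-Hermite}.

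There is essentially no obstacle here; the only subtlety is remembering that the factorization of the exponential requires $\|y\|=1$ (otherwise one would get an extra Gaussian factor $e^{-t^2(1-\|y\|^2)/2}$ that would spoil the clean Hermite product expansion). An alternative inductive proof using the three-term recurrence is also feasible, but the generating function argument is cleaner and avoids bookkeeping with multinomial coefficients.
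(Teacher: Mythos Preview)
Your proof is correct and is essentially identical to the paper's own argument: both use the generating function $e^{tz-t^2/2}=\sum_m H_m(z)t^m/m!$, split the quadratic term via $\|y\|=1$, factor into a product of one-dimensional generating functions, expand, and match coefficients of $t^m$. The paper first derives the identity for the unnormalized $H_n$ and then passes to $\mathcal H_n$, while you work with $\mathcal H_m$ directly, but this is purely cosmetic.
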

\begin{proof}
    Recall Definition~\ref{def-Hermite-poly}. It was shown in \cite{MOS13} that 
    \begin{align*}
        e^{tx-\frac{t^2}{2}} = \sum_{n=0}^{\infty} \frac{H_n(x)}{n!} \cdot t^n \,.
    \end{align*}
    Thus, we see that
    \begin{align*}
        \sum_{n=0}^{\infty} \frac{H_n(\langle x,y \rangle)}{n!} \cdot t^n &= e^{\langle x,y \rangle t-\frac{t^2}{2}} = e^{\langle x,y \rangle t-\frac{\| y \|^2 t^2}{2}} = \prod_{i=1}^{d} e^{x_iy_i t-\frac{y_i^2 t^2}{2}} \\ 
        &= \prod_{i=1}^{d} \Bigg( \sum_{n=0}^{\infty} \frac{H_n(x_i)}{n!} \cdot (y_i t)^n \Bigg) = \sum_{n=0}^{\infty} t^n \sum_{\alpha\in\mathbb N^d:|\alpha|=n} \frac{y^{\alpha} H_{\alpha}(x)}{\alpha!} \,.
    \end{align*}
    Thus, we see that
    \begin{align*}
        H_n(\langle x,y \rangle) = \sum_{\alpha\in\mathbb N^d:|\alpha|=n} \binom{n}{\alpha} y^{\alpha}  H_{\alpha}(x) \,.
    \end{align*}
    Combined with Definition~\ref{def-Hermite-poly}, we get the desired result.
\end{proof}

\subsection{Preliminary results on Wishart constant}

The following result can be found in \cite{Gautschi59}.

\begin{lemma}[Gautschi's inequality]{\label{lem-Gautschi-inequality}}
    for all $x \in \mathbb R_{\geq 0}$ we have $x^{1-s} \leq \frac{\Gamma(x+1)}{\Gamma(x+s)} \leq (x+1)^{1-s}$. In particular, we have when $x\to\infty$
    \begin{align*}
        \frac{ \Gamma(x+\frac{1}{2}) }{ \Gamma(x) } = [1+O(\tfrac{1}{x})] \cdot \sqrt{x} \,. 
    \end{align*}
\end{lemma}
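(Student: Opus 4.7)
The plan is to derive both inequalities as consequences of the log-convexity of the Gamma function (a standard fact following from the Bohr--Mollerup characterization, which says that $\log \Gamma$ is convex on $(0,\infty)$), and then specialize to $s=\tfrac12$ for the asymptotic. Throughout I read the statement as implicitly requiring $s\in[0,1]$, which is the range in which the bound makes sense and in which the paper applies it (with $s=\tfrac12$).

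For the lower bound, I would write $x+s$ as the convex combination $(1-s)\cdot x + s\cdot(x+1)$ and apply convexity of $\log\Gamma$:
\begin{align*}
\log\Gamma(x+s)\ \le\ (1-s)\log\Gamma(x)+s\log\Gamma(x+1).
\end{align*}
Using the functional equation $\Gamma(x+1)=x\Gamma(x)$, the right-hand side is $\log\Gamma(x)+s\log x$, so $\Gamma(x+s)\le x^{s}\Gamma(x)$. Dividing $\Gamma(x+1)=x\Gamma(x)$ by this immediately gives $\Gamma(x+1)/\Gamma(x+s)\ge x^{1-s}$.

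For the upper bound, the key identity is the reverse decomposition
\begin{align*}
x+1\ =\ s\,(x+s)+(1-s)(x+1+s),
\end{align*}
which sandwiches $x+1$ between two ``neighbouring'' shifts of $x+s$. Convexity of $\log\Gamma$ at these three points gives $\log\Gamma(x+1)\le s\log\Gamma(x+s)+(1-s)\log\Gamma(x+1+s)$, and the functional equation $\Gamma(x+1+s)=(x+s)\Gamma(x+s)$ then simplifies this to $\Gamma(x+1)\le (x+s)^{1-s}\Gamma(x+s)$. The trivial inequality $(x+s)^{1-s}\le(x+1)^{1-s}$ (valid for $s\le 1$) completes the upper estimate. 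The only non-routine step in the whole argument is spotting the right convex decomposition here; once it is written down the calculation is automatic.

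Finally, for the asymptotic specialization, at $s=\tfrac12$ the two-sided bound reads $\sqrt{x}\le \Gamma(x+1)/\Gamma(x+\tfrac12)\le\sqrt{x+1}$. Rewriting $\Gamma(x+\tfrac12)/\Gamma(x)=x\cdot\Gamma(x+\tfrac12)/\Gamma(x+1)$ via $\Gamma(x+1)=x\Gamma(x)$ and inverting gives
\begin{align*}
\frac{x}{\sqrt{x+1}}\ \le\ \frac{\Gamma(x+\tfrac12)}{\Gamma(x)}\ \le\ \sqrt{x}.
\end{align*}
The lower endpoint satisfies $x/\sqrt{x+1}=\sqrt{x}\sqrt{1-1/(x+1)}=\sqrt{x}\,[1-O(1/x)]$ as $x\to\infty$, so squeezing the ratio between $\sqrt{x}[1-O(1/x)]$ and $\sqrt{x}$ yields $\Gamma(x+\tfrac12)/\Gamma(x)=[1+O(1/x)]\sqrt{x}$, as claimed.
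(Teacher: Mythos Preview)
Your argument is correct. The two convex decompositions $x+s=(1-s)x+s(x+1)$ and $x+1=s(x+s)+(1-s)(x+1+s)$ together with log-convexity of $\Gamma$ and the functional equation $\Gamma(t+1)=t\Gamma(t)$ indeed yield the two-sided bound, and your squeeze for the $s=\tfrac12$ asymptotic is clean. (One tiny remark: at $x=0$ the first decomposition uses $\Gamma(0)$, but the lower bound $0^{1-s}\le\Gamma(1)/\Gamma(s)$ is trivial there anyway.)

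For comparison: the paper does not actually prove this lemma; it simply records it as a known result and cites Gautschi's 1959 paper. So your write-up supplies a self-contained proof where the paper has none. The log-convexity route you take is in fact the modern standard derivation of Gautschi's inequality; Gautschi's original argument is slightly different (an interpolation argument on integrals), but yours is shorter and more transparent.
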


Based on Lemma~\ref{lem-Gautschi-inequality}, the following result provides a crude bound on $\omega(s,t)$. 

\begin{lemma}{\label{lem-bound-omega-ratio}}
    Recall the definition of $\omega(s,t)$ in \eqref{eq-def-omega(s,t)}. For $k^3=o(d)$ we have
    \begin{align*}
        \frac{ \omega(d,k) }{ \omega(d-2k,k) } = [1+o(1)] \cdot d^{k^2} \mbox{ and } \frac{ \omega(d-2k,k) }{ \omega(d,k) } = [1+o(1)] \cdot d^{-k^2} \,.
    \end{align*}
    In addition, for $k^3,m^3=o(d)$ we have 
    \begin{align*}
        \frac{ \omega(d,m) }{ \omega(d-k,m) } = [1+o(1)] \cdot d^{km/2} \mbox{ and } \frac{ \omega(d-k,m) }{ \omega(d,m) } = [1+o(1)] \cdot d^{-km/2} \,.
    \end{align*}
\end{lemma}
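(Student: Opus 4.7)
The plan is to use the explicit formula in \eqref{eq-def-omega(s,t)} to reduce each ratio of Wishart constants to a product of ratios of Gamma functions (multiplied by an elementary power of $2$), and then approximate each Gamma ratio via the functional equation $\Gamma(z+1) = z\Gamma(z)$ together with Gautschi's inequality (Lemma~\ref{lem-Gautschi-inequality}).

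For the first pair, I would start by dividing the two instances of \eqref{eq-def-omega(s,t)}: the $\pi^{t(t-1)/4}$ factors cancel, the powers of $2$ combine to $2^{\pm k^2}$, and one is left with
\begin{align*}
    \frac{\omega(d-2k,k)}{\omega(d,k)} \;=\; 2^{k^2} \prod_{j=1}^{k} \frac{\Gamma\big(\tfrac{d-j+1}{2}\big)}{\Gamma\big(\tfrac{d-2k-j+1}{2}\big)} \,.
\end{align*}
Since the shift of the Gamma argument inside each factor is the integer $k$, the functional equation gives exactly
\begin{align*}
    \frac{\Gamma\big(\tfrac{d-j+1}{2}\big)}{\Gamma\big(\tfrac{d-2k-j+1}{2}\big)} \;=\; \prod_{i=1}^{k} \frac{d-j-2i+1}{2} \,.
\end{align*}
Each factor equals $(d/2)\cdot(1+O(k/d))$; taking logs, summing the $k$ terms, and exponentiating produces $(d/2)^k\cdot(1+O(k^2/d))$. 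Multiplying the resulting $k$ expressions over $j\in[k]$ and invoking the hypothesis $k^3=o(d)$ (so that $k\cdot k^2/d \to 0$) yields $(d/2)^{k^2}(1+o(1))$. Combined with the prefactor $2^{k^2}$ this is $d^{k^2}(1+o(1))$, and the reciprocal statement follows immediately.

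For the second pair, the analogous expansion gives
\begin{align*}
    \frac{\omega(d-k,m)}{\omega(d,m)} \;=\; 2^{km/2} \prod_{j=1}^{m} \frac{\Gamma\big(\tfrac{d-j+1}{2}\big)}{\Gamma\big(\tfrac{d-k-j+1}{2}\big)} \,,
\end{align*}
but now the shift inside each Gamma ratio is $k/2$, which need not be an integer. I would split on the parity of $k$: when $k=2\ell$, the functional equation applied $\ell$ times expresses each factor as $\prod_{i=1}^{\ell}(d-j+1-2i)/2$, handled exactly as above; when $k=2\ell+1$, I peel off $\ell$ integer steps to reduce to a single half-step $\Gamma(y-\ell)/\Gamma(y-\ell-\tfrac12)$, and then apply the second assertion of Lemma~\ref{lem-Gautschi-inequality}, namely $\Gamma(y+\tfrac12)/\Gamma(y)=(1+O(1/y))\sqrt{y}$ with $y\asymp d$. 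In either case each Gamma ratio evaluates to $(d/2)^{k/2}(1+O(k^2/d))$, and the product over $j\in[m]$ accumulates a multiplicative error of order $m\cdot k^2/d$, which is $o(1)$ under $k^3,m^3=o(d)$ (splitting into the sub-cases $k\le m$ and $k\ge m$ to absorb $mk^2/d$ into either $m^3/d$ or $k^3/d$).

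The main obstacle I anticipate is simply the bookkeeping of the accumulated multiplicative errors, especially in the odd-$k$ case for the second pair: each individual Gamma ratio is only accurate up to a multiplicative factor of $1+O(k^2/d+1/d)$, and these errors compound over $m$ factors, so one has to verify that the sharpness of the hypothesis $k^3,m^3=o(d)$ is just enough to keep the final error $o(1)$ rather than merely $O(1)$.
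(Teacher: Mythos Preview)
Your approach is correct and essentially identical to the paper's: reduce via \eqref{eq-def-omega(s,t)} to products of Gamma ratios, expand integer shifts by the functional equation, and invoke Gautschi (Lemma~\ref{lem-Gautschi-inequality}) for the half-integer part. The only cosmetic difference is that for the second pair the paper telescopes the shift of $k/2$ as $k$ successive half-steps $\prod_{i=1}^{k}\Gamma(\tfrac{d-i-j+2}{2})/\Gamma(\tfrac{d-i-j+1}{2})$ and applies Gautschi to each one, rather than your even/odd split into integer steps plus at most one half-step; both routes yield the same estimate (note your per-ratio error in the second pair should read $O(k(k+m)/d)$ since $j$ ranges up to $m$, but the accumulated error $O(mk(k+m)/d)$ is still $o(1)$ by the same case split you give).
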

\begin{proof}
    Using \eqref{eq-def-omega(s,t)}, we get that
    \begin{align*}
        \frac{ \omega(d,k) }{ \omega(d-2k,k) } &= 2^{k^2} \prod_{j=1}^{k} \frac{ \Gamma(\frac{d-j+1}{2}) }{ \Gamma(\frac{d-2k-j+1}{2}) } = 2^{k^2} \prod_{j=1}^{k} \prod_{i=1}^{k} \Big( \frac{d-j+1}{2} - i \Big) \\
        &= \prod_{j=1}^{k} \prod_{i=1}^{k} \Big( [1+O(k/d)] \cdot d \Big) = [1+o(1)] \cdot d^{k^2} \,,
    \end{align*}
    where in the last equality we use $k^3=o(d)$. Similarly we can get that
    \begin{equation*}
        \frac{ \omega(d-2k,k) }{ \omega(d,k) } = [1+o(1)] \cdot d^{-k^2} \,. 
    \end{equation*}
    In addition, using \eqref{eq-def-omega(s,t)} again we have
    \begin{align*}
        \frac{ \omega(d,m) }{ \omega(d-k,m) } &= 2^{km/2} \cdot \prod_{j=1}^{m} \frac{ \Gamma(\frac{d-j+1}{2}) }{ \Gamma(\frac{d-k-j+1}{2}) } = 2^{km/2} \prod_{j=1}^{m} \prod_{i=1}^{k} \frac{ \Gamma(\frac{d-i-j+2}{2}) }{ \Gamma(\frac{d-i-j+1}{2}) } \\
        &= 2^{km/2} \prod_{j=1}^{m} \prod_{i=1}^{k} \Big( \big( 1+O(\tfrac{2}{d-i-j+1}) \big) \cdot \sqrt{(d-i-j+1)/2} \Big) = [1+o(1)] \cdot d^{km/2} \,, 
    \end{align*}
    where in the last equality we use $k^3,m^3=o(d)$. Similarly we can get that
    \begin{equation*}
        \frac{ \omega(d-k,m) }{ \omega(d,m) } = [1+o(1)] \cdot d^{-km/2} \,.  \qedhere
    \end{equation*}
\end{proof}

\subsection{Preliminary results in probability}

\begin{lemma}{\label{lem-moments-uniform-sphere}}
    For all $\{ \gamma_i\in\mathbb N: 1 \leq i \leq d \}$ we have
    \begin{align*}
        \mathbb E_{Q \sim \operatorname{Unif}(\mathbb S^{d-1})}\Big[ Q_1^{\gamma_1} \ldots Q_{d}^{\gamma_d} \Big] = \mathbf 1_{ \{ \gamma_1,\ldots,\gamma_d \text{ is even} \} } \cdot \tfrac{ \Gamma(\frac{d}{2}) (\gamma_1-1)!! \ldots (\gamma_d-1)!! }{ \Gamma( \frac{d+\gamma_1+\ldots+\gamma_d}{2} ) 2^{\frac{\gamma_1+\ldots+\gamma_d}{2}} } \,.
    \end{align*}
\end{lemma}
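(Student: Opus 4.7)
The plan is to leverage the standard Gaussian representation of the uniform distribution on $\mathbb S^{d-1}$. Let $G=(G_1,\ldots,G_d)$ be a standard Gaussian vector in $\mathbb R^d$. Then $Q:=G/\|G\|$ is uniformly distributed on $\mathbb S^{d-1}$, and crucially, by rotational invariance of the Gaussian distribution, the radius $\|G\|$ is independent of the direction $Q$.

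From this independence, for any $\gamma=(\gamma_1,\ldots,\gamma_d)\in\mathbb N^d$ with $|\gamma|=\gamma_1+\ldots+\gamma_d$, we have the factorization
\begin{align*}
    \mathbb E\big[ G_1^{\gamma_1} \ldots G_d^{\gamma_d} \big] = \mathbb E\big[ \|G\|^{|\gamma|} \big] \cdot \mathbb E\big[ Q_1^{\gamma_1} \ldots Q_d^{\gamma_d} \big] \,,
\end{align*}
so that the target moment can be extracted by dividing the two classical quantities on either side. I would then compute each factor separately. For the numerator, the independence of the coordinates of $G$ together with the standard formula $\mathbb E[G_i^{\gamma_i}] = (\gamma_i-1)!!$ for $\gamma_i$ even and $0$ otherwise gives
\begin{align*}
    \mathbb E\big[ G_1^{\gamma_1} \ldots G_d^{\gamma_d} \big] = \mathbf 1_{ \{ \gamma_1,\ldots,\gamma_d \text{ all even} \} } \cdot (\gamma_1-1)!! \ldots (\gamma_d-1)!! \,.
\end{align*}
For the denominator, since $\|G\|^2 \sim \chi^2_d$, equivalently $\|G\|^2/2 \sim \operatorname{Gamma}(d/2,1)$, a direct computation against the Gamma density yields
\begin{align*}
    \mathbb E\big[ \|G\|^{|\gamma|} \big] = 2^{|\gamma|/2} \cdot \frac{ \Gamma( \frac{d+|\gamma|}{2} ) }{ \Gamma( \frac{d}{2} ) } \,,
\end{align*}
which is well-defined whenever $|\gamma|$ is even (the only case where the numerator is nonzero anyway).

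Taking the ratio produces exactly the claimed expression. No delicate estimates are needed here; the only subtle point is justifying the independence of $\|G\|$ and $Q=G/\|G\|$, which is immediate from the rotational invariance of the standard Gaussian measure (any orthogonal transformation of $G$ preserves its law, hence the conditional law of $Q$ given $\|G\|$ is rotationally invariant, hence uniform on $\mathbb S^{d-1}$, independent of $\|G\|$). Accordingly, I do not expect any genuine obstacle; the proof is essentially a textbook manipulation.
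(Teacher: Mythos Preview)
Your proposal is correct and follows essentially the same approach as the paper: represent $Q$ as $G/\|G\|$ for a standard Gaussian $G$, use the independence of $\|G\|$ and $G/\|G\|$ to factor $\mathbb E[G_1^{\gamma_1}\cdots G_d^{\gamma_d}]$, and compute the two pieces via the standard Gaussian moments and the Gamma distribution of $\|G\|^2$. The only cosmetic difference is notation.
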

\begin{proof}
    Denote $X=(X_1,\ldots,X_d) \sim \mathcal N(0,\mathbb I_d)$. We the have $\| X \| \perp \frac{X}{\|X\|}$ and $\frac{X}{\|X\|} \sim \operatorname{Unif}(\mathbb S^{d-1})$. Thus, we have
    \begin{align*}
        & \mathbb E_{Q \sim \operatorname{Unif}(\mathbb S^{d-1})}\Big[ Q_1^{\gamma_1} \ldots Q_{d}^{\gamma_d} \Big] \cdot \mathbb E\Big[ \|X\|^{\gamma_1+\ldots+\gamma_d} \Big] \\
        =\ & \mathbb E\Big[ \big( \tfrac{X_1}{\|X\|} \big)^{\gamma_1} \ldots \big( \tfrac{X_d}{\|X\|} \big)^{\gamma_d} \Big] \cdot \mathbb E\Big[ \|X\|^{\gamma_1+\ldots+\gamma_d} \Big] \\
        =\ & \mathbb E\big[ X_1^{\gamma_1} \ldots X_d^{\gamma_d} \big] = \mathbf 1_{ \{ \gamma_1,\ldots,\gamma_d \text{ is even} \} } \cdot \prod_{1\leq i \leq d} (\gamma_i-1)!! \,.
    \end{align*}
    In addition, since the law of $\|X\|^2$ is a Gamma distribution with $d$ sample and parameter $\tfrac{1}{2}$, we have
    \begin{align*}
        \mathbb E\Big[ \|X\|^{\gamma_1+\ldots+\gamma_d} \Big] = \frac{ 2^{ \frac{\gamma_1+\ldots+\gamma_d}{2} } \Gamma( \frac{d+\gamma_1+\ldots+\gamma_d}{2} ) }{ \Gamma(\frac{d}{2}) } \,.
    \end{align*}
    Thus, we see that the desired result holds.
\end{proof}

\begin{lemma}[\cite{JW19}, Lemma 2.1]{\label{lem-denisty-submatrix-uniform-O(n)}}
   Let $\mathbf{\Gamma}_d$ be an $d*d$ random matrix which is uniformly distributed on the orthogonal group $O(n)$ and let $\mathbf{Z}_n$ be the upper-left $p*q$ submatrix of $\mathbf{\Gamma}_n$. If $p+q \leq n$ and $p \geq q$ then the joint density function of entries of $\mathbf Z_n$ is given by
   \begin{align*}
       f(Z) = \frac{ \omega(d-p,q) }{ \omega(d,q)(2\pi)^{pq/2} } \cdot \operatorname{det}( \mathbb I_q - Z^{\top} Z )^{ (n-p-q-1)/2 } \zeta(Z^{\top}Z) \,,
   \end{align*}
   where $\zeta(Z^{\top}Z)$ is the indicator function of the set that all $q$ eigenvalues of $Z^{\top}Z$ are in $[0,1]$. When $p<q$, the density of $\mathbf Z_n$ is obtained by interchanging $p$ and $q$ in the above Wishart constant.
\end{lemma}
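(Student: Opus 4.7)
The plan is to compute the density of the upper-left $p\times q$ block $\mathbf Z_n$ of a Haar-random matrix $\mathbf\Gamma_d\in O(d)$ via the classical realization of the uniform measure on the Stiefel manifold $V_q(\mathbb R^d)=\{W\in\mathbb R^{d\times q}:W^\top W=\mathbb I_q\}$ through Gaussian matrices, followed by a matrix-valued change of variables.

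First, I would observe that the first $q$ columns of $\mathbf\Gamma_d$ form a uniform point $W\in V_q(\mathbb R^d)$, which can be realized as $W=X(X^\top X)^{-1/2}$ with $X\in\mathbb R^{d\times q}$ having i.i.d.\ $\mathcal N(0,1)$ entries. Partitioning $X=\binom{X_1}{X_2}$ and $W=\binom{W_1}{W_2}$ with $X_1,W_1\in\mathbb R^{p\times q}$, the target distribution is the law of $W_1=X_1(X^\top X)^{-1/2}$. The identity $W^\top W=\mathbb I_q$ immediately gives $W_2^\top W_2=\mathbb I_q-W_1^\top W_1$, forcing all eigenvalues of $W_1^\top W_1$ to lie in $[0,1]$; this explains the support of the density and the indicator $\chi_0$.

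Second, to extract the density, I would perform a chain of matrix-valued changes of variables. The polar decomposition $X=WT$ with $T=(X^\top X)^{1/2}\succ 0$ factorizes the Gaussian density on $X$ as the product of a Wishart density on $T^2$ and the uniform measure on $V_q(\mathbb R^d)$, with Jacobian $\det(T)^{d-q}$ (a standard computation in matrix-variate distribution theory). Next, on the Stiefel manifold I would parametrize the fiber over $W_1=Z$ by $W_2=U(\mathbb I_q-Z^\top Z)^{1/2}$ for $U\in V_q(\mathbb R^{d-p})$; the Jacobian of this change of coordinates, computed via the wedge-product calculus for symmetric positive-definite matrices, supplies the key factor $\det(\mathbb I_q-Z^\top Z)^{(d-p-q-1)/2}$. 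Integrating $T$ out against the Wishart weight and $U$ against the invariant measure on $V_q(\mathbb R^{d-p})$ collapses the normalizing constants into the expected combination, yielding
\begin{align*}
f(Z)=\frac{\omega(d-p,q)}{\omega(d,q)(2\pi)^{pq/2}}\,\det(\mathbb I_q-Z^\top Z)^{(d-p-q-1)/2}\,\chi_0(Z^\top Z).
\end{align*}

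The main obstacle is the careful bookkeeping of the second change of variables $W_2=U(\mathbb I_q-Z^\top Z)^{1/2}$: the Jacobian with respect to the invariant measures on the two Stiefel manifolds $V_q(\mathbb R^d)$ and $V_q(\mathbb R^{d-p})$ requires recognizing that $Z$ enters simultaneously as a base variable and inside the matrix square root, so one must apply the identity $[W_2^\top dW_2]=\det(\mathbb I_q-Z^\top Z)^{(d-p-q-1)/2}[U^\top dU][dZ]$ on exterior products from the matrix-variate Beta-distribution literature. I would invoke this identity rather than re-derive it. Finally, the case $p<q$ follows because $\mathbf\Gamma_d^\top$ is also Haar on $O(d)$, so the law of $\mathbf Z_n$ equals the transpose of the upper-left $q\times p$ block of $\mathbf\Gamma_d^\top$, reducing to the $p\geq q$ case with $p$ and $q$ interchanged.
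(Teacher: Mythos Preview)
The paper does not prove this lemma at all: it is stated with a citation to \cite{JW19}, Lemma~2.1, and no argument is given. Your outline via the Gaussian realization of the Stiefel manifold and the matrix-variate change of variables $W_2=U(\mathbb I_q-Z^\top Z)^{1/2}$ is the standard route to this density (it is essentially how the result is derived in the matrix-variate Beta literature), so there is nothing in the present paper to compare it against.
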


\begin{lemma}{\label{lem-Gaussian-exponential-moment}}
    Let $Z$ be an $d*m$ matrix with entries are i.i.d.\ $\mathcal N(0,1)$ random variables. Then for all $\lambda>0$ and $A \in \mathbb R^{d*m}$ we have
    \begin{align*}
        \mathtt E\Big[ e^{ -\lambda \|Z\|_{\Fop}^2 + \langle A,Z \rangle } \Big] = (1+2\lambda)^{-dm/2} \cdot e^{ \|A\|_{\Fop}^2/2(1+2\lambda) } \,.
    \end{align*}
\end{lemma}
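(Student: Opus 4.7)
The plan is to reduce the statement to a product of one-dimensional Gaussian integrals and then complete the square in each factor. Since the entries $Z_{ij}$ are i.i.d.\ $\mathcal N(0,1)$, both $\|Z\|_{\Fop}^2 = \sum_{i,j} Z_{ij}^2$ and $\langle A,Z\rangle = \sum_{i,j} A_{ij}Z_{ij}$ split additively across the $dm$ coordinates, so the exponential $e^{-\lambda\|Z\|_{\Fop}^2+\langle A,Z\rangle}$ factors as $\prod_{i,j} e^{-\lambda Z_{ij}^2 + A_{ij}Z_{ij}}$ and the expectation factors accordingly.

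The whole computation therefore reduces to the one-dimensional identity
\begin{equation*}
    \mathbb E\big[ e^{-\lambda z^2 + a z} \big] = \int_{\mathbb R} \frac{1}{\sqrt{2\pi}} e^{-(1/2+\lambda)z^2 + a z}\,\mathrm d z = \frac{1}{\sqrt{1+2\lambda}}\, e^{a^2/2(1+2\lambda)}
\end{equation*}
for $z \sim \mathcal N(0,1)$, which is obtained by completing the square in $z$ (writing the quadratic in the exponent as $-(1/2+\lambda)(z - a/(1+2\lambda))^2 + a^2/2(1+2\lambda)$) and using the standard Gaussian normalization.

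Taking the product over all $(i,j) \in [d]\times[m]$ then yields
\begin{equation*}
    \prod_{i,j} (1+2\lambda)^{-1/2} e^{ A_{ij}^2/2(1+2\lambda) } = (1+2\lambda)^{-dm/2}\, e^{\|A\|_{\Fop}^2/2(1+2\lambda)} \,,
\end{equation*}
which is the claimed formula. There is no real obstacle here; the only mild care needed is to note that $\lambda>0$ guarantees convergence of every one-dimensional integral, so Fubini's theorem legitimately interchanges the product and the expectation.
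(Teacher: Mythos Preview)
Your proof is correct and follows essentially the same approach as the paper: both reduce the matrix expectation to a product of one-dimensional Gaussian integrals via independence, apply the scalar identity $\mathbb E_{z\sim\mathcal N(0,1)}[e^{-\lambda z^2+az}]=(1+2\lambda)^{-1/2}e^{a^2/2(1+2\lambda)}$, and multiply over the $dm$ coordinates.
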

\begin{proof}
    Note that
    \begin{align*}
        \mathbb E_{X \sim \mathcal N(0,1)}\Big[ e^{-\lambda x^2 + ax} \Big] = \tfrac{1}{\sqrt{1+2\lambda}} e^{ \frac{a^2}{2(1+2\lambda)} } \,,
    \end{align*}
    we then have
    \begin{equation*}
        \mathtt E\Big[ e^{ -\lambda \|Z\|_{\Fop}^2 + \langle A,Z \rangle } \Big] = \prod_{ \substack{ 1 \leq i \leq d \\ 1 \leq j \leq m } } \tfrac{1}{\sqrt{1+2\lambda}} e^{ \frac{a_{i,j}^2}{2(1+2\lambda)} } = (1+2\lambda)^{-dm/2} \cdot e^{ \|A\|_{\Fop}^2/2(1+2\lambda) } \,. \qedhere
    \end{equation*}
\end{proof}

\begin{lemma}{\label{lem-Gaussian-quadratic-form-moment}}
    Let $k<d$ and let $Z$ be an $d*k$ matrix with entries are i.i.d. $\mathcal N(0,1)$ random variables. Then for any $d*d$ semidefinite matrix $A$ we have
    \begin{align*}
        \mathbb E\Big[ e^{ -\operatorname{tr}(Z^{\top} A Z) } \Big] = \operatorname{det}(\mathbb I_d+2A)^{-k/2} \,.
    \end{align*}
\end{lemma}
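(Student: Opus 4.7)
The plan is to reduce the computation to a product of independent one–dimensional Gaussian integrals by diagonalizing $A$ and using the rotational invariance of the standard Gaussian measure.

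First, since $A$ is symmetric and positive semidefinite, write the spectral decomposition $A = U \Lambda U^{\top}$ with $U \in O(d)$ and $\Lambda = \operatorname{diag}(\lambda_1, \ldots, \lambda_d)$, $\lambda_i \geq 0$. Set $W = U^{\top} Z$. Because each column of $Z$ is an i.i.d.\ $\mathcal N(0, \mathbb I_d)$ vector and the standard Gaussian law on $\mathbb R^d$ is invariant under orthogonal transformations, each column of $W$ is again $\mathcal N(0, \mathbb I_d)$, and the columns remain mutually independent. Hence $W$ is a $d \times k$ matrix with i.i.d.\ $\mathcal N(0,1)$ entries.

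Next, expand the quadratic form in the new variables:
\begin{align*}
    \operatorname{tr}(Z^{\top} A Z) = \operatorname{tr}(W^{\top} \Lambda W) = \sum_{i=1}^{d} \sum_{j=1}^{k} \lambda_i W_{ij}^2.
\end{align*}
By the independence of the $W_{ij}$ and the one–dimensional identity $\mathbb E[e^{-\lambda X^2}] = (1+2\lambda)^{-1/2}$ for $X \sim \mathcal N(0,1)$ (which was already invoked in Lemma~\ref{lem-Gaussian-exponential-moment}), I obtain
\begin{align*}
    \mathbb E\big[ e^{-\operatorname{tr}(Z^{\top} A Z)} \big] = \prod_{i=1}^{d} \prod_{j=1}^{k} \mathbb E\big[ e^{-\lambda_i W_{ij}^2} \big] = \prod_{i=1}^{d} (1+2\lambda_i)^{-k/2}.
\end{align*}

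Finally, since $\det(\mathbb I_d + 2A) = \det(\mathbb I_d + 2\Lambda) = \prod_{i=1}^{d}(1+2\lambda_i)$, the product collapses to $\det(\mathbb I_d + 2A)^{-k/2}$, which is the claimed identity. There is no real obstacle here: the entire argument is essentially the observation that orthogonal invariance decouples the quadratic form into $dk$ independent squared Gaussians. The assumption $k < d$ plays no role in the calculation and is presumably inherited from the context where the lemma is applied.
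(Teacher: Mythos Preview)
Your proof is correct and follows essentially the same approach as the paper: both arguments use the orthogonal invariance of the standard Gaussian to reduce to the case of diagonal $A$, then factor the expectation into a product of one-dimensional Gaussian integrals. Your observation that the hypothesis $k<d$ is unnecessary for the identity is also accurate.
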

\begin{proof}
    Note that for all $U \in O(d)$ we have
    \begin{align*}
        \mathbb E\Big[ e^{ -\operatorname{tr}(Z^{\top} A Z) } \Big] = \mathbb E\Big[ e^{ -\operatorname{tr}(Z^{\top} U^{\top} A U Z) } \Big] \,.
    \end{align*}
    Thus, without loss of generality we may assume that $A=\mathrm{Diag}(a_1,\ldots,a_d)$ is a diagonal matrix. In this case, we get from direct calculation that
    \begin{equation*}
        \mathbb E\Big[ e^{ -\operatorname{tr}(Z^{\top} A Z) } \Big] = \prod_{1 \leq i \leq d} \mathbb E\Big[ e^{-a_i\sum_{j=1}^{k} Z_{i,j}^2 } \Big] = \prod_{1 \leq i \leq d} (1+2a_i)^{-k/2} = \operatorname{det}(\mathbb I_d+2A)^{-k/2} \,. \qedhere
    \end{equation*}
\end{proof}

\begin{lemma}{\label{lem-bound-uniform-orthogonal-integration}}
    Denote $\nu=\nu_d$ to be the Haar measure over $O(d)$. For all $|\epsilon k|=O(1)$ and $|\epsilon| \leq 1-\Omega(1)$, we have as $d \to \infty$
    \begin{align*}
        \mathbb E_{Q \sim \nu}\Big[ \operatorname{det}(\mathbb I_d+\epsilon Q)^{k} \Big] = 1 + O(\epsilon k) \,.
    \end{align*}
\end{lemma}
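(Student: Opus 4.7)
The plan is to take the logarithm of $\det(\mathbb I_d+\epsilon Q)$ and exploit the joint moment structure of the trace powers $p_n(Q):=\operatorname{tr}(Q^n)$ of a Haar orthogonal matrix. Since $\epsilon\leq 1-\Omega(1)$ and $\|Q\|_{\op}=1$, the Taylor series
\begin{align*}
\log\det(\mathbb I_d+\epsilon Q) \;=\; \sum_{n\geq 1}\tfrac{(-1)^{n-1}}{n}\,\epsilon^n\,p_n(Q)
\end{align*}
converges absolutely (since $|p_n(Q)|\leq d$), and hence $\det(\mathbb I_d+\epsilon Q)^k=\exp\bigl(k\sum_{n\geq 1}\tfrac{(-1)^{n-1}}{n}\epsilon^n p_n(Q)\bigr)$. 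First I would truncate this series at some threshold $N=N(d)$ with $1\ll N\ll d$: the tail $k\sum_{n>N}\tfrac{(-\epsilon)^n}{n}p_n(Q)$ is uniformly bounded by $\tfrac{kd\,\epsilon^{N+1}}{1-\epsilon}$, which becomes negligible for $N=O(\log d)$ because $\epsilon\leq 1-\Omega(1)$.

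For the truncated exponent, I would invoke the classical Diaconis--Shahshahani result: for $Q\sim\nu$ on $O(d)$ and $N\ll d$, the joint moments of $(p_1(Q),\ldots,p_N(Q))$ up to total degree of order $d$ agree exactly with those of $(X_1+\eta_1,\ldots,X_N+\eta_N)$, where $X_n\sim\mathcal N(0,n)$ are independent and $\eta_n=\mathbf 1\{n\text{ even}\}$. Treating this Gaussian model as a surrogate and using the Gaussian moment generating function,
\begin{align*}
\mathbb E\Bigl[\exp\Bigl(k\sum_{n\leq N} c_n(X_n+\eta_n)\Bigr)\Bigr] \;=\; \exp\Bigl(k\sum_{n\leq N} c_n\eta_n \;+\; \tfrac{k^2}{2}\sum_{n\leq N} n\,c_n^2\Bigr)
\end{align*}
with $c_n=\tfrac{(-1)^{n-1}}{n}\epsilon^n$. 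Evaluating the two sums in the $N\to\infty$ limit gives $k\sum c_n\eta_n=\tfrac{k}{2}\log(1-\epsilon^2)$ and $\tfrac{k^2}{2}\sum n c_n^2=-\tfrac{k^2}{2}\log(1-\epsilon^2)$, so the leading prediction is $\mathbb E_{Q\sim\nu}[\det(\mathbb I_d+\epsilon Q)^k]=(1-\epsilon^2)^{-k(k-1)/2}\cdot(1+o(1))$.

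Finally, since $\epsilon\leq 1-\Omega(1)$ we have $-\log(1-\epsilon^2)=O(\epsilon^2)$, hence $(1-\epsilon^2)^{-k(k-1)/2}=\exp(O(\epsilon^2 k^2))=1+O(\epsilon^2 k^2)$. Under the hypothesis $\epsilon k=O(1)$, we have $\epsilon^2k^2=(\epsilon k)\cdot(\epsilon k)\leq C\cdot\epsilon k$, yielding the desired bound $\mathbb E_{Q\sim\nu}[\det(\mathbb I_d+\epsilon Q)^k]=1+O(\epsilon k)$. The hard part will be converting the formal Gaussian-surrogate computation into a rigorous estimate: one must quantify the Diaconis--Shahshahani moment-matching up to the needed total degree, control the truncation error in both the Taylor series and the MGF, and verify that all combined errors are absorbed into the final $O(\epsilon k)$ bound rather than inflating it. These are standard technical points but require careful bookkeeping since $k$ may grow with $d$.
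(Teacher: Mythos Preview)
Your approach is correct in outline and genuinely different from the paper's. You aim for the precise asymptotic $(1-\epsilon^2)^{-k(k-1)/2}$ via Diaconis--Shahshahani moment matching and a Gaussian surrogate MGF, which indeed gives $1+O(\epsilon k)$ under the stated hypotheses. The paper instead first reduces to the regime $\epsilon k=\Theta(1)$ via the Jensen-type inequality $\mathbb E[X]\le \mathbb E[X^m]^{1/m}$ (taking $m\approx 1/(\epsilon k)$ so that $C^{1/m}=1+O(\epsilon k)$), and then only needs the crude bound $\mathbb E_Q[\det(\mathbb I_d+\epsilon Q)^k]=O(1)$ in that regime. For this it writes $\det(\mathbb I_d+\epsilon Q)^k=\exp(dW_d)$ with $W_d=\int k\log(1+\epsilon e^{it})\,d\bar F_d(t)$ a Lipschitz linear eigenvalue statistic (Lipschitz constant $O(\epsilon k/(1-\epsilon))=O(1)$), observes that its mean under the uniform spectral law vanishes, and invokes the sub-Gaussian concentration $\mathbb P(|W_d|\ge t)\le 2e^{-cd^2t^2}$ of Meckes--Meckes to integrate out the tail.

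What each approach buys: yours yields the sharp constant and explains \emph{why} the answer is $1+O(\epsilon k)$; the paper's sidesteps precisely the ``hard part'' you flagged. The gap you identify---turning finite-degree moment matching into an MGF comparison when $k$ may grow with $d$---is real: Diaconis--Shahshahani controls moments only up to total weighted degree $O(d)$, while the exponential expansion visits all degrees. One natural way to close it is to bound the tail of the exponential expansion by concentration of the linear statistic $\sum_{n\le N}c_n p_n(Q)$, but that is exactly the Meckes--Meckes input the paper uses directly. So your route is viable but ultimately leans on the same concentration phenomenon; the paper's route is shorter because it never asks for more than $O(1)$ after the $\mathbb E[X]\le\mathbb E[X^m]^{1/m}$ reduction.
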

\begin{proof}
    Clearly it suffices to show that when $|\epsilon k|=\Theta(1)$ we have $\mathbb E_{Q \sim \nu}\big[ \operatorname{det}(\mathbb I_d+\epsilon Q)^{k} \big] = O(1)$, and the general cases follows from the simple inequality $\mathbb E[X] \leq \mathbb E[X^m]^{\frac{1}{m}}$ for $m \geq 1$ and $X \geq 0$. Denote $\{ e^{i\theta_1}, \ldots, e^{i\theta_d}: \theta_1, \ldots, \theta_d \in [0,2\pi] \}$ to be the spectrum of $Q$. In addition, denote $\overline{F}_d$ to be the empirical distribution of $\theta_1,\ldots,\theta_d$. We then have
    \begin{align*}
        \operatorname{det}(\mathbb I_d+\epsilon Q)^{k} = \exp\Big( k \sum_{\ell=1}^{d} \log(1+\epsilon e^{i\theta_{\ell}}) \Big) = \exp\big( d \cdot W_d \big) \,,
    \end{align*}
    where 
    \begin{align*}
        W_d = \int_{0}^{2\pi} k\log(1+\epsilon e^{it}) \mathrm{d}\overline{F}_d(t) \,.
    \end{align*}
    Denote $\varphi(t)=k\log(1+\epsilon e^{it})$. Note that when $|\epsilon k|=O(1)$ and $|\epsilon| \leq 1 -\Omega(1)$ we have $|\varphi'(t)|\leq \frac{|\epsilon k|}{1-\epsilon} = \Theta(1)$ and thus $\varphi$ is 1-Lipchitz. 
    It was well known (see, e.g., \cite[Theorem~4.4.27]{AGZ10} and \cite[Theorem~2.1]{MM13}) that $\overline{F}_d$ weakly converges to $F$, the uniform distribution over $[0,2\pi]$. In addition, we have the following bound for the tail probability of $W_d$: 
    \begin{align*}
        \mathbb P\Big( \big| W_d \big| \geq t \Big) = \mathbb P\Big( \Big| W_d - k\int_{0}^{2\pi} \log(1+\epsilon e^{it}) \mathrm{d}F(t) \Big| \geq t \Big) \leq 2 e^{ -\Theta(1) \cdot d^2 t^2 } \,,
    \end{align*}
    where the equality follows from 
    \begin{align*}
        \int_{0}^{2\pi} \log(1+\epsilon e^{it}) \mathrm{d}F(t) = 0 \mbox{ for all } |\epsilon|<1
    \end{align*}
    and the inequality follows from \cite[Corollary~2.4]{MM13}. Thus, we have
    \begin{equation*}
        \mathbb E_{Q \sim \nu}\Big[ \operatorname{det}(\mathbb I_d+\epsilon Q)^{k} \Big] \leq 2 \int_{0}^{\infty} d e^{dt} \cdot e^{ -\Theta(1) \cdot d^2 t^2 } \mathrm{d}t = e^{ O(1) } = O(1) \,. \qedhere
    \end{equation*}
\end{proof}

\bibliographystyle{alpha}
\small

\end{document}